\def\x{\bm{x}}
\def\ie{$i.e.$}
\def\eg{$e.g.$}
\long\def\comment#1{}
\newtheorem{theorem}{Theorem}
\newtheorem{lemma}{Lemma}
\newtheorem{defn}{Definition}
\title{Toward Adversarial Robustness via Semi-supervised Robust Training}
\author{%
	Yiming Li$^{1}$, Baoyuan Wu$^{2}$, Yan Feng$^{1}$
Yanbo Fan$^2$, Yong Jiang$^1$, Zhifeng Li$^2$, Shutao Xia$^1$\\
$^1$Tsinghua Shenzhen International Graduate School, Tsinghua University, China\\
$^2$Tencent AI Lab, China\\
\texttt{wubaoyuan1987@gmail.com}; \texttt{xiast@sz.tsinghua.edu.cn}\\
}
\begin{document}

\maketitle

\begin{abstract}
Adversarial examples have been shown to be the severe threat to deep neural networks (DNNs). One of the most effective adversarial defense methods is adversarial training (AT) through minimizing the adversarial risk $R_{adv}$,  which encourages both the benign example $\x$ and its adversarially perturbed neighborhoods within the $\ell_{p}$-ball to be predicted as the ground-truth label. In this work, we propose a novel defense method, the robust training (RT), by jointly minimizing two separated risks ($R_{stand}$ and $R_{rob}$), which are with respect to the benign example and its neighborhoods, respectively. The motivation is to explicitly and jointly enhance the accuracy and the adversarial robustness. We prove that $R_{adv}$ is upper-bounded by $R_{stand} + R_{rob}$, which implies that RT has similar effect as AT. Intuitively, minimizing the standard risk enforces the benign example to be correctly predicted, and the robust risk minimization encourages the predictions of the neighbor examples to be consistent with the prediction of the benign example.  Besides, since $R_{rob}$ is independent of the ground-truth label, RT is naturally extended to the semi-supervised mode (\ie, SRT), to further enhance the adversarial robustness. Moreover, we extend the $\ell_{p}$-bounded neighborhood to a general case, which covers different types of perturbations, such as the pixel-wise (\ie, $\x + \boldsymbol{\delta}$) or the spatial perturbation (\ie, $\bm{A} \x + \bm{b}$). Extensive experiments on benchmark datasets not only verify the superiority of the proposed SRT method to state-of-the-art methods for defensing pixel-wise or spatial perturbations separately, but also demonstrate its robustness to both perturbations simultaneously. The code for reproducing main results is available at \url{https://github.com/THUYimingLi/Semi-supervised_Robust_Training}.
\end{abstract}

\vspace{-1.5em}
\section{Introduction}
\vspace{-0.5em}

It has been shown that deep neural networks (DNNs) are vulnerable to adversarial examples \cite{szegedy2013,zhou2018,dong2019}. Considering the wide application of DNNs in many mission-critical tasks (\eg, face recognition), it is urgent to develop effective defense methods for adversarial examples. One of the most promising defense methods is adversarial training (AT) \cite{goodfellow2014, madry2017}, which minimizes the adversarial risk $R_{adv}$. It firstly defines the perturbed neighborhood set bounded by $\ell_p$-norm around each benign example $\x$, and the perturbation is generated by some off-the-shelf adversarial attack methods, such as PGD \cite{madry2017} or FGSM \cite{goodfellow2014}.  $R_{adv}$ indicates the maximal loss value among the neighborhood samples and the benign example. It couples the classification risks on both the benign example $\x$ and its surrounding $\ell_p$-bounded perturbed neighborhood examples. Minimizing $R_{adv}$ encourages both $\x$ and its neighborhoods to be correctly predicted. 

In this work, we propose to separate the classification risk on $\x$ and its perturbed neighborhoods, respectively. We propose a novel robust training (RT) method by minimizing $R_{stand} + \lambda R_{rob}$. In RT, $R_{stand}$ indicates whether benign example $\x$ is correctly predicted, while the robust risk $R_{rob}$ indicates the maximal value of the $0/1$-loss that measures whether $\x$ and one of its perturbed neighborhoods are predicted as the same class, among all neighborhoods of $\x$. We prove that $R_{adv}$ is upper-bounded by $R_{stand} + R_{rob}$, which guarantees that RT has the similar effect as AT. Specifically, minimizing $R_{rob}$ encourages the predictions of the neighborhood examples to be consistent with the prediction of $\x$. Compared to AT, RT has two additional benefits. First, the balance between the classification accuracy on benign examples and the robustness to adversarial examples can be explicitly controlled by adjusting a trade-off hyper-parameter, \ie, $\lambda$. 
Second, one important property of $R_{rob}$ is that the ground-truth label of $\x$ is not involved, while only the prediction of $\x$ is adopted. It means that $R_{rob}$ doesn't require labeled training examples. 
Therefore, RT can be naturally extended to the semi-supervised mode, dubbed semi-supervised robust training (SRT), by incorporating massive unlabeled examples into $R_{rob}$. Experiments verify that the usage of the unlabeled examples in SRT could significantly enhance both the accuracy and the robustness to adversarial examples. 

Moreover, we notice that most existing works on adversarial training adopt the $\ell_p$-bounded neighborhood when generating adversarial examples during the training. This neighborhood corresponds to the pixel-wise adversarial perturbations, \ie, $\x + \boldsymbol{\delta}$. 
Meanwhile, many researches have found that there are also many other types of adversarial perturbations, such as the spatial perturbation or the functional perturbation. 
It has been observed \cite{engstrom2019} that the defense designed for pixel-wise perturbations doesn't work for other types of perturbations. Is it possible to train a robust model to defend different types of adversarial perturbations simultaneously? To explore this problem, we extend the $\ell_p$-bounded neighborhood definition to a general case, since the perturbation type is closely related to the perturbed neighborhood. This new definition allows a general transformation-based perturbation and measures the distance between $\x$ and its perturbations using different metrics. This extension could be naturally embedded into SRT, as it only updates the form of neighborhood set. 
Consequently, the model trained using SRT with general perturbed neighborhood could simultaneously defend different types of perturbations.

The main contributions of this work are three-fold.
{\bf 1)} We propose a robust training method by jointly minimizing the standard risk and the robust risk, which is naturally extended the semi-supervised mode. 
{\bf 2)} By generalizing the definition of the perturbed neighborhood to cover different types of perturbations, our robust training achieves the joint robustness to different perturbations, such as the pixel-wise and spatial perturbation. 
{\bf 3)} Extensive experiments on benchmark datasets verify the superiority of the proposed SRT method to state-of-the-art adversarial training methods, as well as the robustness of SRT to pixel-wise and spatial perturbations simultaneously.

\vspace{-0.8em}
\section{Related Work}
\vspace{-0.8em}

\noindent \textbf{Supervised Adversarial Defense}.
DNNs are known to be vulnerable to different types of well-designed small adversarial perturbation, such as pixel-wise perturbation \cite{szegedy2013,poursaeed2018,zeng2019} and spatial perturbation \cite{fawzi2015,engstrom2019,yang2019}. Those attacks usually based on the relation between the prediction and the ground-truth label of the sample, \ie they are in a supervised manner. Toward the adversarial robustness against those attacks, several supervised adversarial defense methods were proposed. These methods can be roughly divided into three main categories, including adversarial training based defense \cite{goodfellow2014,madry2017,zhang2019}, detection based defense \cite{lu2017,metzen2017,liu2019}, and reconstruction-based defense \cite{gu2014,samangouei2018,bai2019}. Adversarial training based defense is currently the most mainstream research direction, which improves the adversarial robustness via adding various adversarially manipulated samples during the training process. For example, standard adversarial training improved the model robustness through training on adversarial examples generated by FGSM \cite{goodfellow2014} and PGD \cite{madry2017}, and the trade-off inspired adversarial defense (TRADES) generated manipulated samples by further considering the trade-off between robustness and accuracy \cite{zhang2019}. Recently, inspired by the geometric property of loss surface, other regularization-based defenses were also proposed \cite{pmlr-v97-simon-gabriel19a,qin2019adversarial,xu2019adversarial}. Although many defenses have been proposed, there is still a long way to go for solving the adversarial vulnerability problem.

Recently, some attempts have been proposed to defend multiple perturbations together. In \cite{engstrom2019}, they observed that pixel-wise defense techniques have limited benefit to the spatial robustness. This problem is further investigated in \cite{tramer2019}, where the author proved that there exists a trade-off in robustness to different types of attacks in a natural and simple statistical setting. In other words, it is likely that there is no defense that could reach best robustness under every attacks. How to defend against different types of attacks simultaneously is still an important open question.

\noindent \textbf{Semi-supervised Learning and its Usage in Defense}.
The semi-supervised learning focuses on how to utilize a large amount of unlabeled data to learn better models \cite{narayanan2010,rifai2011,sheikhpour2017}. Some recent works proposed to extend the supervised adversarial training to the semi-supervised setting. For example, the virtual adversarial loss \cite{miyato2018} is defined as the robustness of the conditional label distribution around each benign example against local perturbation, which firstly connected the semi-supervised learning with the adversarial training. 
Besides, some works uncovered the role of unlabeled data in the adversarial defense theoretically \cite{carmon2019,stanforth2019}. It is proved \cite{stanforth2019} that the sample complexity for learning a pixel-wise adversarially robust model from unlabeled data matches the fully supervised case up to constant factors, under simplified statistical settings. Similarly, \cite{carmon2019} proved that the gap of sample complexity between standard and robust classification can be filled by using additional unlabeled data in the Gaussian model. However, although they all provided some theoretical insights for the importance of unlabeled data in adversarial defense, the intrinsic reason is still not well answered.

\section{Proposed Method}

\subsection{Preliminaries}
We denote the classifier as $f_{\boldsymbol{w}}: \mathcal{X} \rightarrow [0,1]^{|\mathcal{Y}|}$, with $\mathcal{X} \subset \mathbb{R}^d$ being the instance space and $\mathcal{Y}= \{1,2,\cdots, K\}$ being the label space. $f(\x)$ indicates the posterior vector with respect to $K$ classes, and $C(\x) = \arg\max f_{\boldsymbol{w}}(\x)$ denotes the predicted label. 
The labeled dataset is denoted as $\mathcal{D}_L = \left\{(\bm{x}_i, y_i) | i = 1,\ldots, N_l \right\}$, where $(\bm{x}_i, y_i)$ is independent and identically sampled from an unknown latent distribution $\mathcal{P}_{\mathcal{X} \times \mathcal{Y}}$. 
Let $\mathcal{D}_L' = \left\{\bm{x}|(\bm{x}, y) \in \mathcal{D}_L \right\}$ indicates the instance set of $\mathcal{D}_L$, $\mathcal{D}_U = \{ \x_i | i = 1, \ldots, N_u \}$ denotes the unlabeled dataset. 

\begin{defn}
The $\epsilon$-bounded transformation-based neighborhood set of the benign example $\x$ is defined as follows:
\begin{flalign}
\mathcal{N}_{\epsilon, \bm{T}}(\bm{x}) = \left\{\bm{T}(\bm{x};\theta)| \  dist\left(\bm{T}(\bm{x};\theta), \bm{x}\right) \leq \epsilon \right\},
\end{flalign}
where $\bm{T}(\cdot;\theta)$ indicates a parametric transformation, and $dist(\cdot, \cdot)$ is a given distance metric corresponding to $\bm{T}(\cdot;\theta)$. 
Non-negative hyper-parameter $\epsilon$ denotes the maximum perturbation size.
\label{label: definition of neighborhood}
\end{defn}

\vspace{-1em}
{\bf Remark}.
$\mathcal{N}_{\epsilon, \bm{T}}$ can reduce to some widely used neighborhood sets through different specifications of $\bm{T}(\cdot;\theta)$ and $dist(\cdot, \cdot)$.
For example, if we set $\bm{T}(\bm{x};\theta) = \bm{x} + \theta$ and $dist\left(\bm{T}(\bm{x};\theta), \bm{x}\right) = \|\bm{T}(\bm{x};\theta) - \bm{x}\|_\infty$, then $\mathcal{N}_{\epsilon, \bm{T}}$ becomes the $\ell_{\infty}$-bounded neighborhood set used in the pixel-wise adversarial attack and defense \cite{papernot2016limitations,madry2017,zhang2019}.
If we set $\bm{T}(\bm{x};\theta) = [\cos{\theta}, -\sin{\theta}; \sin{\theta}, \cos{\theta}] \x$ and 
$dist\left(\bm{T}(\bm{x};\theta), \bm{x}\right) = \theta$, then 
$\mathcal{N}_{\epsilon, \bm{T}}$ indicates the rotation-based neighborhood set used in the spatial adversarial attack and defense \cite{engstrom2019,yang2019}.
This flexibility is important for developing a method to defend multiple types of adversarial perturbations, which will be shown later.

Based on the general transformation-based neighborhood defined above, we can extend the traditional adversarial risk and the robust risk to the form as follows:

\begin{defn} [Standard, Adversarial, and Robust Risk]
\label{def: three risks}
\end{defn}
\begin{itemize}
    \item \emph{The standard risk $R_{stand}$ measures whether the prediction of $\x$ (\ie, $C(\x)$), is same with its ground-truth label $y$. Its definition with respect to a labeled dataset $\mathcal{D}$ is formulated as 
    \begin{flalign}
    R_{stand}(\mathcal{D}) = \mathbb{E}_{(\bm{x}, y) \sim  \mathcal{P}_{\mathcal{D}}} \left[ \mathbb{I}\{C(\bm{x}) \neq y\}\right],
    \end{flalign}
    where $\mathcal{P}_{\mathcal{D}}$ indicates the distribution behind $\mathcal{D}$. 
    $\mathbb{I}(a)$ denotes the indicator function: $\mathbb{I}(a) = 1$ if $a$ is true, otherwise $\mathbb{I}(a) = 0$.}
    \cite{zhang2019}
    \item \emph{The $\mathcal{N}_{\epsilon, \bm{T}}$-based adversarial risk with respect to $\mathcal{D}$ is defined as 
    \begin{flalign}
    \hspace{-1em} R_{adv}(\mathcal{D}) = \mathbb{E}_{(\bm{x}, y) \sim \mathcal{P}_{\mathcal{D}}} [ \max \limits_{\bm{x'} \in \mathcal{N}_{\epsilon, \bm{T}}(\bm{x})} \mathbb{I}\{C(\bm{x'}) \neq y\}].
    \end{flalign}
    }
    \item \emph{The $\mathcal{N}_{\epsilon, \bm{T}}$-based robust risk with respect to $\mathcal{D}$ is defined as 
    \begin{flalign}
    \hspace{-1.4em} R_{rob}(\mathcal{D}) = \mathbb{E}_{\bm{x} \sim \mathcal{P}_{\mathcal{D'}}} [ \max \limits_{\bm{x'} \in \mathcal{N}_{\epsilon, \bm{T}}(\bm{x})} \mathbb{I}\{C(\bm{x'}) \neq C(\bm{x})\} ].
    \end{flalign}
    }
\end{itemize}

{\bf Remark}. 
$R_{adv}$ considers the classification accuracy on both the benign example $\x$ and its adversarial perturbations within $\mathcal{N}_{\epsilon, \bm{T}}(\x)$. 
$R_{stand}$ is only related to the classification accuracy on $\x$. 
$R_{rob}$ doesn't involve the ground-truth label $y$, and it only corresponds to the consistency between the prediction of $\bm{x}$ and its adversarial perturbations, \ie, robustness. 
Their relations are shown in Lemma \ref{lemma: relations among risks}. Due to the space limit, the proof of Lemma \ref{lemma: relations among risks} will be presented in the {\bf Appendix (Section 1)}.

\begin{lemma}\label{lemma: relations among risks}
Standard Risk, adversarial risk and robust risk of a sample $\bm{x}$ are correlated. Specifically,
\begin{equation}
    R_{adv}(\bm{x}) = R_{stand}(\bm{x}) + \left(1 - R_{stand}(\bm{x})\right)R_{rob}(\bm{x}).  
\end{equation}
\end{lemma}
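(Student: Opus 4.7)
The plan is to reduce the identity to a pointwise case analysis on the binary random variables $\mathbb{I}\{C(\bm{x})\neq y\}$ and $\max_{\bm{x'}\in\mathcal{N}_{\epsilon,\bm{T}}(\bm{x})}\mathbb{I}\{C(\bm{x'})\neq C(\bm{x})\}$. At a single sample, each of $R_{stand}(\bm{x})$, $R_{adv}(\bm{x})$ and $R_{rob}(\bm{x})$ takes values in $\{0,1\}$, so the asserted multiplicative identity is really a statement about four combinations of $0/1$ values that can be enumerated directly.

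First I would record the one small structural fact that the whole argument rests on: $\bm{x}\in\mathcal{N}_{\epsilon,\bm{T}}(\bm{x})$. This holds whenever the transformation family contains the identity at some $\theta_0$ with $dist(\bm{T}(\bm{x};\theta_0),\bm{x})=0\le\epsilon$, which is implicit in all the instantiations given after Definition~\ref{label: definition of neighborhood} (both $\theta=\bm{0}$ for the additive perturbation and $\theta=0$ for the rotation). The consequence I need is the monotonicity
\begin{equation}
\mathbb{I}\{C(\bm{x})\neq y\}\;\le\;\max_{\bm{x'}\in\mathcal{N}_{\epsilon,\bm{T}}(\bm{x})}\mathbb{I}\{C(\bm{x'})\neq y\},
\end{equation}
i.e.\ $R_{stand}(\bm{x})\le R_{adv}(\bm{x})$, together with the analogous inclusion of $\bm{x}$ inside its own neighborhood for the robust risk.

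Next I would split into the two cases determined by $R_{stand}(\bm{x})$. If $R_{stand}(\bm{x})=1$, then $C(\bm{x})\neq y$; using $\bm{x}\in\mathcal{N}_{\epsilon,\bm{T}}(\bm{x})$ the inner max in $R_{adv}(\bm{x})$ is already attained and equals $1$, so both sides of the identity reduce to $1$. If $R_{stand}(\bm{x})=0$, then $C(\bm{x})=y$, so for every $\bm{x'}$ the two events $\{C(\bm{x'})\neq y\}$ and $\{C(\bm{x'})\neq C(\bm{x})\}$ coincide; hence $R_{adv}(\bm{x})=R_{rob}(\bm{x})$, while the right-hand side collapses to $0+(1-0)R_{rob}(\bm{x})=R_{rob}(\bm{x})$. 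Either way the identity holds.

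The only genuinely delicate point — and what I would flag as the main obstacle — is the assumption that $\bm{x}$ lies in its own $\mathcal{N}_{\epsilon,\bm{T}}(\bm{x})$; without it the inequality $R_{stand}(\bm{x})\le R_{adv}(\bm{x})$ can fail and the case $R_{stand}(\bm{x})=1$ breaks. I would therefore state this explicitly as a standing hypothesis on the transformation family (satisfied by all examples discussed in the paper), and then the identity follows from the two-line case analysis above. Extending from a single $\bm{x}$ to the dataset-level statement in Definition~\ref{def: three risks} would be done, if needed, by applying the pointwise identity under the expectation, noting that the cross term $(1-R_{stand}(\bm{x}))R_{rob}(\bm{x})$ remains a valid $\{0,1\}$-valued random variable so no measurability issue arises.
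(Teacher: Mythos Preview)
Your proposal is correct and follows essentially the same approach as the paper: the paper also splits according to whether $C(\bm{x})=y$ (their events $\bm{A_1}$ and $\bm{A_2}$ are precisely your two cases), and likewise relies on $\bm{x}\in\mathcal{N}_{\epsilon,\bm{T}}(\bm{x})$ to handle the case $R_{stand}(\bm{x})=1$. Your direct case analysis is a slightly more economical packaging of the same argument, and your explicit flagging of the standing hypothesis $\bm{x}\in\mathcal{N}_{\epsilon,\bm{T}}(\bm{x})$ is a useful clarification that the paper leaves implicit.
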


\subsection{Robust Training}
\label{sec: robust training}

Based on the Definition \ref{def: three risks}, we propose a novel robust training (RT)\footnote{We notice that the name ``robust training" has been used in some other works, such as in \cite{man2011new, shariati2013, han2018co, zugner2019}, etc. However, their objective functions are totally different with ours. For example,  the "robust training" in \cite{zugner2019} indicates a robust hinge loss in the object function. Here we re-define "robust training" based on the robust risk.} method, as follows
\begin{equation}
    \min_{\boldsymbol{w}} R_{stand}(\mathcal{D}_L) + \lambda \cdot R_{rob}(\mathcal{D}_L),
    \label{eq: robust training}
\end{equation}
where $\lambda>0$ is a trade-off parameter. 
The relationship between RT and the standard adversarial training (AT) \cite{madry2017} is analyzed in Theorem \ref{theorem: relation between at and rt} (whose proof is in the \textbf{Appendix (Section 1)}). RT is minimizing the upper bound of $R_{adv}$, if we set $\lambda=1$.
It tells that RT could have the similar effect with AT. However, due to the separation of the classification accuracy and the robustness in RT, there are two important advantages compared to AT. 
{\bf 1)} $\lambda$ can explicitly control the trade-off between the accuracy to benign examples and the robustness to adversarial examples, which will be verified in later experiments. $\lambda$ can be adjusted according to the user's demand. 
{\bf 2)} What is more important, since $R_{adv}$ doesn't utilize the ground-truth labels, it can be defined with respect to any unlabeled example. Thus, it is natural to extend RT to the semi-supervised mode, as shown in Section \ref{sec: SRT}.

\begin{theorem}
\label{theorem: relation between at and rt}
The relationship between the adversarial training and the robust training is as follows
\begin{equation}
    \min_{\boldsymbol{w}} R_{adv}(\mathcal{D}) \leq \min_{\boldsymbol{w}} \{R_{stand}(\mathcal{D}) + R_{rob}(\mathcal{D})\}.
    \label{eq: relation between at and rt}
\end{equation}
\end{theorem}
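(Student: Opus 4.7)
The plan is to derive the bound pointwise from Lemma~\ref{lemma: relations among risks}, propagate it through the expectation, and then take the minimum over $\boldsymbol{w}$. First I would apply Lemma~\ref{lemma: relations among risks} at each sample $\bm{x}$, which gives the identity $R_{adv}(\bm{x}) = R_{stand}(\bm{x}) + (1 - R_{stand}(\bm{x}))\,R_{rob}(\bm{x})$. Because $R_{stand}(\bm{x})$ is a $0/1$ indicator, the factor $1 - R_{stand}(\bm{x})$ lies in $[0,1]$, and $R_{rob}(\bm{x}) \geq 0$, so $(1 - R_{stand}(\bm{x}))\,R_{rob}(\bm{x}) \leq R_{rob}(\bm{x})$. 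This yields the pointwise inequality $R_{adv}(\bm{x}) \leq R_{stand}(\bm{x}) + R_{rob}(\bm{x})$ for every $\bm{x}$ and every parameter $\boldsymbol{w}$.

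Next, I would integrate both sides against $\mathcal{P}_{\mathcal{D}}$ and use linearity of expectation to conclude $R_{adv}(\mathcal{D}) \leq R_{stand}(\mathcal{D}) + R_{rob}(\mathcal{D})$ for any fixed classifier $f_{\boldsymbol{w}}$. Here one small bookkeeping point is that $R_{rob}$ is written in Definition~\ref{def: three risks} as an expectation under $\mathcal{P}_{\mathcal{D}'}$, the marginal of $\mathcal{P}_{\mathcal{D}}$ on the instance space; marginalizing out $y$ on the right-hand side matches this definition exactly, so no integrability issue arises.

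Finally, since the inequality holds for every $\boldsymbol{w}$, I would appeal to monotonicity of the minimum: for any $\boldsymbol{w}^\star$ achieving the infimum on the right, $\min_{\boldsymbol{w}} R_{adv}(\mathcal{D}) \leq R_{adv}(\mathcal{D})\big|_{\boldsymbol{w}^\star} \leq R_{stand}(\mathcal{D}) + R_{rob}(\mathcal{D})\big|_{\boldsymbol{w}^\star} = \min_{\boldsymbol{w}}\{R_{stand}(\mathcal{D}) + R_{rob}(\mathcal{D})\}$, which is exactly \eqref{eq: relation between at and rt}. I do not anticipate a genuine obstacle: the substance of the theorem is already packaged inside Lemma~\ref{lemma: relations among risks}, and what remains is the elementary bound $(1 - R_{stand})R_{rob} \leq R_{rob}$ combined with monotonicity of expectation and of the minimum.
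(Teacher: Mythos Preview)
Your proposal is correct and follows essentially the same route as the paper: invoke Lemma~\ref{lemma: relations among risks} to obtain the pointwise identity, bound $(1-R_{stand})R_{rob}\le R_{rob}$ using that both quantities lie in $[0,1]$, take expectations, and then pass to the minimum over $\boldsymbol{w}$. Your justification of the final step via monotonicity of the minimum is in fact slightly cleaner than the paper's appeal to non-negativity of the risks.
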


\subsection{Semi-supervised Robust Training}
\label{sec: SRT}

As stated above, $R_{rob}$ could be defined with respect to any unlabeled data. Thus, we propose the semi-supervised robust training (SRT), as follows
\begin{flalign}
\min_{\boldsymbol{w}} R_{stand}(\mathcal{D}_L) + \lambda \cdot R_{rob}(\mathcal{D}_L \cup \mathcal{D}_U). 
\label{eq: SRT}
\end{flalign}
Note that RT (\ie, Eq. (\ref{eq: robust training})) is a special case of Eq. (\ref{eq: SRT}), in the case that $\mathcal{D}_U = \emptyset$.

Due to the non-differentiability of the indicator function $\mathbb{I}(\cdot)$ used in $R_{stand}$ and $R_{rob}$, we resort to its approximated loss function, such as the cross-entropy. 
Besides, we replace the expectation with respect to $\mathcal{P}_{\mathcal{D}}$ in Eq. (\ref{eq: SRT}) by the sample mean with respect to $\mathcal{D}$. Then, the approximated objective function is formulated as follows:

\begin{equation}\label{eq: SRT approximation}
\min_{\boldsymbol{w}} \frac{1}{| \mathcal{D}_L |} \sum_{(\x, y) \in \mathcal{D}_L} \mathcal{L}\big(f_{\boldsymbol{w}}(\bm{x}), y\big) 
+ \frac{\lambda}{| \mathcal{D}_{L}' \cup \mathcal{D}_U |} \sum_{\x \in \mathcal{D}_{L}' \cup \mathcal{D}_U} \max_{\x' \in \mathcal{N}_{\epsilon, \bm{T}}(\x)} \mathcal{L}\big(f_{\boldsymbol{w}}(\bm{x'}), C(\bm{x}) \big),
\end{equation}
where $\mathcal{L}(\cdot)$ indicates the cross-entropy function. 

Similar to the optimization for adversarial training, the minimax problem (\ref{eq: SRT approximation}) can be solved by alternatively solving the inner-maximization and the outer-minimization sub-problem, as follows: 
\begin{itemize}[leftmargin=2em]
\item {\bf Inner-maximization}: given $\boldsymbol{w}$, $\forall \x \in \mathcal{D}_{L}' \cup \mathcal{D}_U$, we derive an perturbed example $\x'$ by
\begin{flalign}
\x' \leftarrow \underset{\x' \in \mathcal{N}_{\epsilon, \bm{T}}(\x)}{\arg\max} \mathcal{L}\big(f_{\boldsymbol{w}}(\bm{x'}), C(\bm{x}) \big).
\end{flalign}
%
\item {\bf Outer-minimization}: given all generated $\x'$, the parameter $\boldsymbol{w}$ is updated as follows:

\begin{equation}\label{eq: outer-opt}
\boldsymbol{w} \leftarrow \underset{\boldsymbol{w}}{\arg\min}  ~\frac{1}{| \mathcal{D}_L |} \sum_{(\x, y) \in \mathcal{D}_L} \mathcal{L}\big(f_{\boldsymbol{w}}(\bm{x}), y\big) 
+
\frac{\lambda}{| \mathcal{D}_{L}' \cup \mathcal{D}_U |} \sum_{\x \in \mathcal{D}_{L}' \cup \mathcal{D}_U} \mathcal{L}\big(f_{\boldsymbol{w}}(\bm{x'}), C(\bm{x}) \big).
\end{equation}
\end{itemize}

According to the specified $\mathcal{N}_{\epsilon, \bm{T}}(\x)$, the inner-maximization could be solved by different adversarial attacks, such as PGD in pixel-wise scenario \cite{madry2017} or Worst-of-$k$ in spatial scenario \cite{engstrom2019}. For outer-minimization, there is no close-form solution to this problem. Instead, we update $\boldsymbol{w}$ using back-propagation \cite{D1986Learning} with the stochastic gradient descent \cite{zhang2004}.

Besides, adversarial training ($e.g.$, AT \cite{madry2017}, TRADES \cite{zhang2019} and SRT) are often much more time-consuming than the standard training. To alleviate this problem, we further propose an accelerated version of SRT (dubbed fast SRT), which is described and evaluated in the \textbf{Appendix (Section 9)}.

\begin{table}[htbp]
\small
  \begin{minipage}{0.49\linewidth}
    \centering
    \caption{Performance of SRT and RT under different spatial settings on CIFAR-10 dataset.}
    \begin{tabular}{l|c|c|c}
    \hline
        & Clean & RandAdv & GridAdv  \\ \hline
    Standard & 80.63   & 8.82  & 0.09 \\ \hline
    RT ($\lambda = 0.15$) &  85.24   & 64.45    & 41.23  \\
    SRT ($\lambda = 0.15$) & $\bm{88.04}$   &  $\bm{78.03}$ & $\bm{62.97}$  \\ \hline
    RT ($\lambda = 0.20$) & 85.71   & 66.43 & 44.28  \\
    SRT ($\lambda = 0.20$) & $\bm{88.87}$  & $\bm{78.99}$  & $\bm{64.83}$  \\ \hline
    RT ($\lambda = 0.25$) & 85.59   & 67.95 & 45.93 \\
    SRT ($\lambda = 0.25$) & $\bm{88.40}$   & $\bm{78.39}$ & $\bm{64.15}$  \\ \hline
    RT ($\lambda = 0.30$) & 84.99   & 67.47    & 45.72 \\
    SRT ($\lambda = 0.30$) & $\bm{87.99}$   & $\bm{78.12}$ & $\bm{62.73}$  \\ \hline
    \end{tabular}
    \label{SRTbetter(spatial)}
  \end{minipage}
  \quad
  \begin{minipage}{0.47\linewidth}
    \centering
    \caption{Performance of SRT and RT under different pixel-wise settings on CIFAR-10 dataset.}
    \begin{tabular}{l|c|c|c}
    \hline
        & Clean & FGSM & PGD  \\ \hline
    Standard & 87.69   & 6.65        & 0  \\ \hline
    RT ($\lambda = 0.2$) & 83.24  & 48.94        & 31.91 \\
    SRT ($\lambda = 0.2$) & $\bm{83.83}$  & $\bm{51.39}$        & $\bm{34.94}$ \\ \hline
    RT ($\lambda = 0.4$) &  81.05   & 51.15        & 36.01 \\
    SRT ($\lambda = 0.4$) & $\bm{82.28}$   &  $\bm{56.06}$       & $\bm{41.84}$ \\ \hline
    RT ($\lambda = 0.6$) & 79.93   & 51.56        & 37.73 \\
    SRT ($\lambda = 0.6$) & $\bm{81.03}$  & $\bm{56.83}$        & $\bm{44.60}$ \\ \hline
    RT ($\lambda = 0.8$) & 78.37   & 51.97        & 37.91  \\
    SRT ($\lambda = 0.8$) & $\bm{80.23}$   & $\bm{58.14}$  & $\bm{47.24}$ \\ \hline
    \end{tabular}
    \label{SRTbetter(pixel)}
  \end{minipage}
 \vspace{-1em}
\end{table}

\section{Experiments}
Unless otherwise specified, the training set is divided into two parts, one of which contains 10,000 samples serves as the labeled dataset, and the remaining part is used as the unlabeled dataset in all experiments. In addition, in the semi-supervised setting, we use batch sizes proportional to the dataset size, \ie for a total batch size $m$, the number of labeled samples and unlabeled samples within a batch is $m \cdot \frac{|\mathcal{D}_L|}{|\mathcal{D}_L|+|\mathcal{D}_U|}$ and $m \cdot \frac{|\mathcal{D}_U|}{|\mathcal{D}_L|+|\mathcal{D}_U|}$, respectively. 
Similar with most previous works of adversarial training,
we also conduct experiments on CIFAR-10 \cite{krizhevsky2009} and MNIST \cite{lecun1998} dataset. Note that other large scale dataset like the popular ImageNet \cite{russakovsky2015imagenet} has been rarely adopted in adversarial training, due to the unbearable high computational cost. As reported in \cite{xie2019feature}, training a ResNet-101 \cite{he2016} model using the AT method \cite{madry2017} on ImageNet takes about 4,864 GPU hours. Since almost all state-of-the-art methods compared in this work didn't evaluate on ImageNet, the cost of making a thorough comparison on ImageNet is far beyond our budget. 

\subsection{Comparison between RT and SRT}\label{unlabel}

\textbf{Settings.} We compare the performance of RT and SRT on the CIFAR-10 dataset. 
In the pixel-wise adversarial scenario, we use the wide residual network WRN-34-10 \cite{zagoruyko2016}, and set the perturbation size $\epsilon_1= 0.031$. To evaluate the robustness, we apply PGD-20 with step size 0.003 and FGSM under perturbation size 0.031, as suggested in \cite{zhang2019}. 
In the spatial adversarial scenario, we use ResNet \cite{he2016} with 4 residual groups with filter size [16, 16, 32, 64] and 5 residual units each. 
We perform RandAdv and GridAdv attack \cite{engstrom2019} to evaluate the spatial adversarial robustness. The spatial perturbation (\ie, rotation and translation) of RandAdv and GridAdv are determined through random sampling and grid search, respectively. The maximum rotational perturbation is set as $30^\circ$, and we consider translations of at most 3 pixels. More detailed settings (\eg, learning schedule), and additional results on the MNIST dataset will be presented in the \textbf{Appendix (Section 2-3)}.

\textbf{Results.} As shown in Table \ref{SRTbetter(spatial)}-\ref{SRTbetter(pixel)}, SRT is superior to RT under the same conditions, no matter in spatial scenario or pixel-wise scenario. In particular, SRT is much more robust than RT under stronger attack (PGD and GridAdv). For example, the adversarial accuracy of SRT is over more than $5\%$ compared to RT under PGD attack in most cases. This improvement is even more significant under GridAdv, which is at least $17\%$ (more than $20\%$ in most cases). In addition, the clean accuracy of SRT is also higher than that of RT in all cases. As such, we will only use SRT in the following comparisons. 

In particular, an interesting phenomenon is that the clean accuracy does not decrease as the hyper-parameter $\lambda$ increases in the spatial adversarial settings. In other words, there is no trade-off between the spatial adversarial robustness and the standard prediction performance. In contrast, the inherent trade-off between clean accuracy and adversarial accuracy was verified theoretically and empirically in the pixel-wise adversarial settings \cite{tsipras2019}. This interesting contradiction was discussed in \cite{yang2019}, which is out the scope of this paper.

\vspace{-0.3em}
\begin{table*}[ht]
\begin{center}
\small
\caption{Adversarial accuracy ($\%$) of different models under spatial attacks on CIFAR-10 and MNIST dataset. The perturbation is determined through random sampling (RandAdv) or grid search (GridAdv) with rotations and translations considered both together and separately (“RandAdv.R” and “GridAdv.R” for rotations, and “RandAdv.T” and “GridAdv.T” for transformations).}
\vspace{-0.2em}
\label{spaadv}
\scalebox{0.9}{
\begin{tabular}{c|c|c|cc|cc|cc}
\hline
                          & Defense      & Clean & RandAdv & GridAdv & RandAdv.T & GridAdv.T & RandAdv.R & GridAdv.R \\ \hline
\multirow{5}{*}{CIFAR-10} & Standard   & 80.63  & 8.82     & 0.09     & 33.61      & 19.67      & 19.55      & 10.73      \\
                          & AT          & 65.59  & 4.92     & 0.22     & 16.23      & 7.49       & 14.47      & 8.37       \\
                          & Worst-of-$k$ & 82.02  & 70.92    & 54.80    & 75.49      & 69.45      & 73.18      & 68.22      \\
                          & KLR     & 85.40    & 72.77    & 56.28    & 77.43      & 72.71      & 74.80      & 71.04      \\
                          & SRT         & $88.87$  & $\bm{78.99}$      & $\bm{64.83}$    & $\bm{82.16}$       & $\bm{78.47}$       & $\bm{80.84}$       & $\bm{77.24}$       \\ \hline
\multirow{5}{*}{MNIST}    & Standard   & 97.19     & 14.01       & 0.00       & 35.31         & 5.12         & 65.06         & 51.32         \\
                          & AT           & 97.96     & 29.84       & 0.01       & 51.83         & 10.72         & 71.66         & 57.71     \\
                          & Worst-of-$k$ & 98.05     & 94.77       & 84.64       & 96.07         & 93.99         & 95.70 & 94.24         \\
                          & KLR          & 98.43 & 95.26 & 86.08   & 96.63  & 95.07         & 95.92         & 94.48         \\
                          & SRT         & $98.64$  & $\bm{97.02}$      & $\bm{92.12}$    & $\bm{97.68}$       & $\bm{96.85}$       & $\bm{97.33}$       & $\bm{96.54}$       \\ \hline
\end{tabular}
}
\end{center}
\vspace{-1em}
\end{table*}

\begin{table*}[ht]
\begin{center}
\small
\caption{Adversarial accuracy ($\%$) of different models under pixel-wise attacks.}
\vspace{-0.2em}
\label{pixeladv}
\scalebox{0.9}{
\begin{tabular}{c|c|c|ccccccc}
\hline
                          & Defense    & Clean & FGSM           & PGD            & MI-FGSM        & JSMA & C\&W          & Point-wise Attack       & DDNA          \\ \hline
\multirow{6}{*}{CIFAR-10} & Standard & 88.43 & 7.26           & 0              & 0              & 9.98   & 0.14        & 2.06           & 0.01           \\
                          & AT         & 77.17 & 50.35          & 37.37          & 36.97          & 10.80 & 33.42         & {14.27}    & 20.87          \\
                          & TRADES     & 76.23 & 51.98          & 40.20           & 39.79          & 11.90  & 36.09        & 12.32          & 22.37          \\
                          & UAT        & 78.45 & 56.35          & 44.96          & 44.56          & 13.23  & 40.03         & 12.62          & 24.08          \\
                          & RST        & 79.99 & \textbf{59.55} & {48.38}    & {47.97}    & {13.78}  & 42.99   & 13.42          & \textbf{27.73} \\
                          & SRT       & 78.46 & {59.34}    & \textbf{48.66} & \textbf{48.24} & \textbf{16.99}  & \textbf{43.33} & \textbf{18.80} & {27.71}    \\ \hline
\multirow{6}{*}{MNIST}    & Standard & 99.01 & 41.06          & 2.87           & 4.37           & 10.18  & 0.01        & 0.04           & 16.44          \\
                          & AT         & 98.99 & 96.61          & 94.69          & 93.57          & 20.99  & 94.67        & 2.47           & 95.35          \\
                          & TRADES     & 98.99 & 96.92          & 95.12          & 93.98          & 18.48  & 92.37        & 2.08           & 94.12          \\
                          & UAT        & 99.16 & {97.51}    & {96.14}    & {95.65}    & 23.79  & 96.16        & {5.52}     & \textbf{96.39} \\
                          & RST        & 98.83 & 97.21          & 95.47          & 95.05          & {26.63} & 95.34   & 3.30           & 94.25          \\
                          & SRT       & 99.28 & \textbf{97.77} & \textbf{96.60} & \textbf{95.79} & \textbf{26.79} & \textbf{96.19} & \textbf{5.81}  & {96.10}   \\ \hline
\end{tabular}
}
\end{center}
\vspace{-1.2em}
\end{table*}

\subsection{Spatial Adversarial Defense}\label{spatialdefense}
\textbf{Settings.} We select Worst-of-$k$ \cite{engstrom2019} and KL divergence based regularization (KLR) \cite{yang2019} as the baseline methods in the following evaluations. 
Those methods achieve the state-of-the-art results in the realm of spatial adversarial defense. Besides, we also provide the standard training model (Standard) and the pixel-wise adversarial training (AT) \cite{madry2017} as other important baselines for reference. 
The settings of hyper-parameters of baseline methods follow the suggestions in their original manuscripts.
For the proposed SRT method, we set $\lambda =0.2$ on both CIFAR-10 and MNIST. Other detailed settings, $e.g.$, learning rate and iterations, will be presented in the \textbf{Appendix (Section 4)}.

\textbf{Results.} As shown in Table \ref{spaadv}, SRT significantly exceeds all existing methods. For example, SRT achieves more than $5\%$ improvement under any attacks on the CIFAR-10 dataset, compared with the current state-of-the-art method, the KLR. In addition, similar to the phenomenon in Section \ref{unlabel}, the clean accuracy of SRT is also higher compared to all existing methods. This improvement of clean accuracy is presumably because the spatial adversarial defense serve as an effective data augmentation approach. Note that the pixel-wise adversarial defense (\ie, AT) has almost no benefit to the spatial adversarial robustness. In other words, pixel-wise adversarial robustness does not necessarily mean general adversarial robustness. Therefore, developing a general adversarial framework toward general adversarial robustness is of great significance.

\vspace{-0.5em}
\subsection{Pixel-wise Adversarial
Defense}\label{pixeldefense}
\vspace{-0.5em}

\textbf{Settings.} We compare SRT with PGD-based AT \cite{madry2017}, trade-off inspired adversarial defense (TRADES) \cite{zhang2019}, unsupervised adversarial training (UAT) \cite{tsipras2019} and robust self-training (RST) \cite{carmon2019}. TRADES is the state-of-the-art supervised defense, while UAT and RST are the most advanced semi-supervised adversarial defenses. In SRT, we set $\lambda = 1$ on both CIFAR-10 and MNIST dataset. To evaluate the robustness, we apply FGSM \cite{goodfellow2014}, PGD \cite{kurakin2016}, momentum iterative attack (MI-FGSM) \cite{dong2018boosting} under $\ell_\infty$ norm. We also apply some other attack methods, including jacobian saliency map attack (JSMA) \cite{papernot2016limitations}, Carlini \& Wagner attack (C\&W) \cite{carlini2017}, point-wise attack \cite{schott2018}, and decoupled direction and norm attack (DDNA) \cite{rony2019} for the evaluation. Other settings are shown in the \textbf{Appendix (Section 5)}.

\textbf{Results.}  As shown in Table \ref{pixeladv}, compared to the supervised adversarial defense methods, semi-supervised defenses (\ie, UAT, RST, and SRT) achieve significant improvement in both clean accuracy and adversarial robustness with additional unlabeled samples. Among three different semi-supervised defenses, SRT reaches the best performance. Specifically, on the CIFAR-10 and MNIST datasets, SRT has the best adversarial robustness in most cases. In the case where a few are not the best, SRT is still the sub-optimal. In addition, although UAT and RST perform similarly to and even exceed SRT in a few cases, their superiority is not consistent across different datasets. For example, RST has better performance on CIFAR-10, whereas UAT is better on MNIST.

\comment{
\begin{figure}[t]
\small
\centering
\vskip -0.15in
\includegraphics[width=0.47\textwidth]{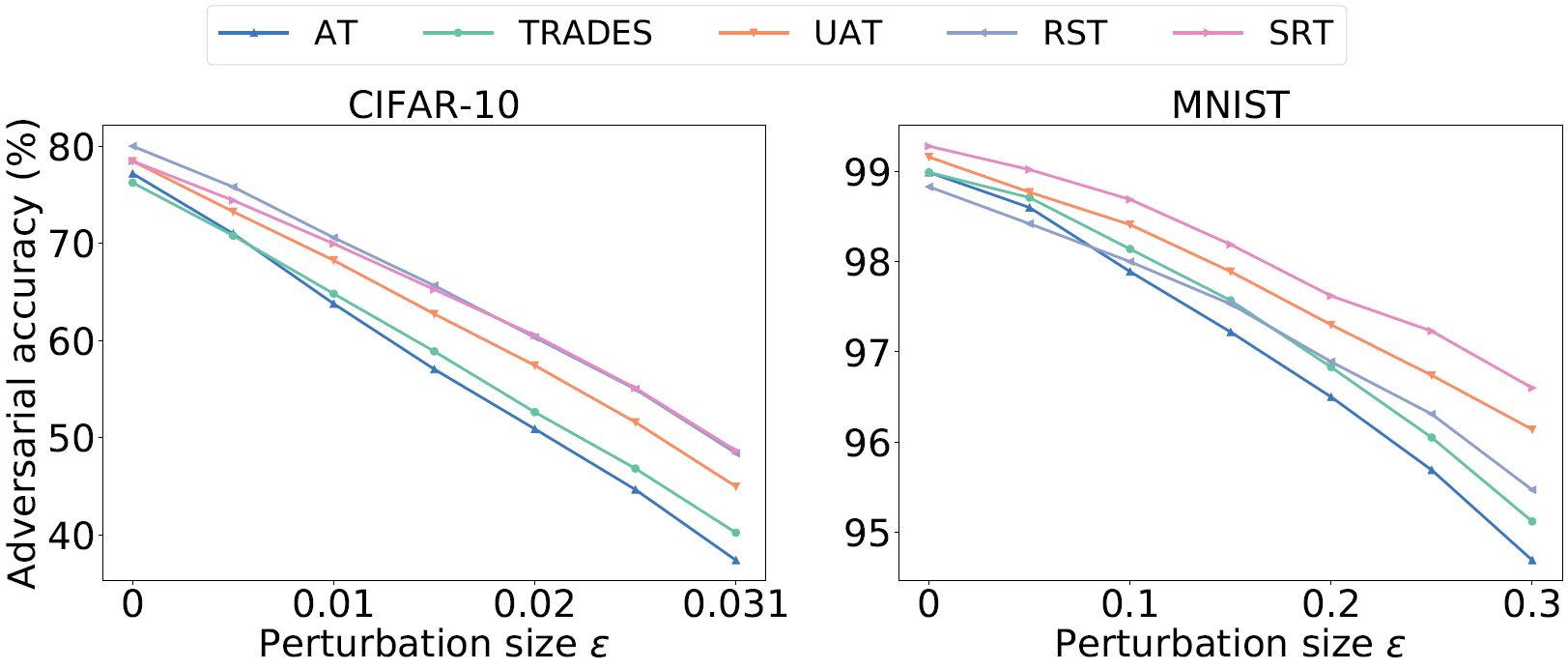}
\caption{Comparison between pixel-wise adversarial defense methods under PGD with different perturbation size. Perturbation size $\epsilon$ denotes the upper bound of the perturbation (see Eq.(\ref{label: definition of neighborhood})).}
\label{fig4} 
\vskip -0.15in
\end{figure}

We also compare different defense methods under PGD with different perturbation size. As shown in Figure \ref{fig4}, the curve of the SRT is almost always at the top. In other words, SRT has the best adversarial robustness under different penetration sizes in most cases. Especially on the MNIST dataset, SRT significantly outperforms all other methods. In particular, although the robustness of RST is not far from SRT on CIFAR-10, and its clean accuracy is sometimes even higher than SRT, it requires much more computational resources since self-training requires training an additional intermediate model.
}

\begin{figure*}[ht]
\small
\centering
\vspace{-1em}
\includegraphics[width=0.85\textwidth]{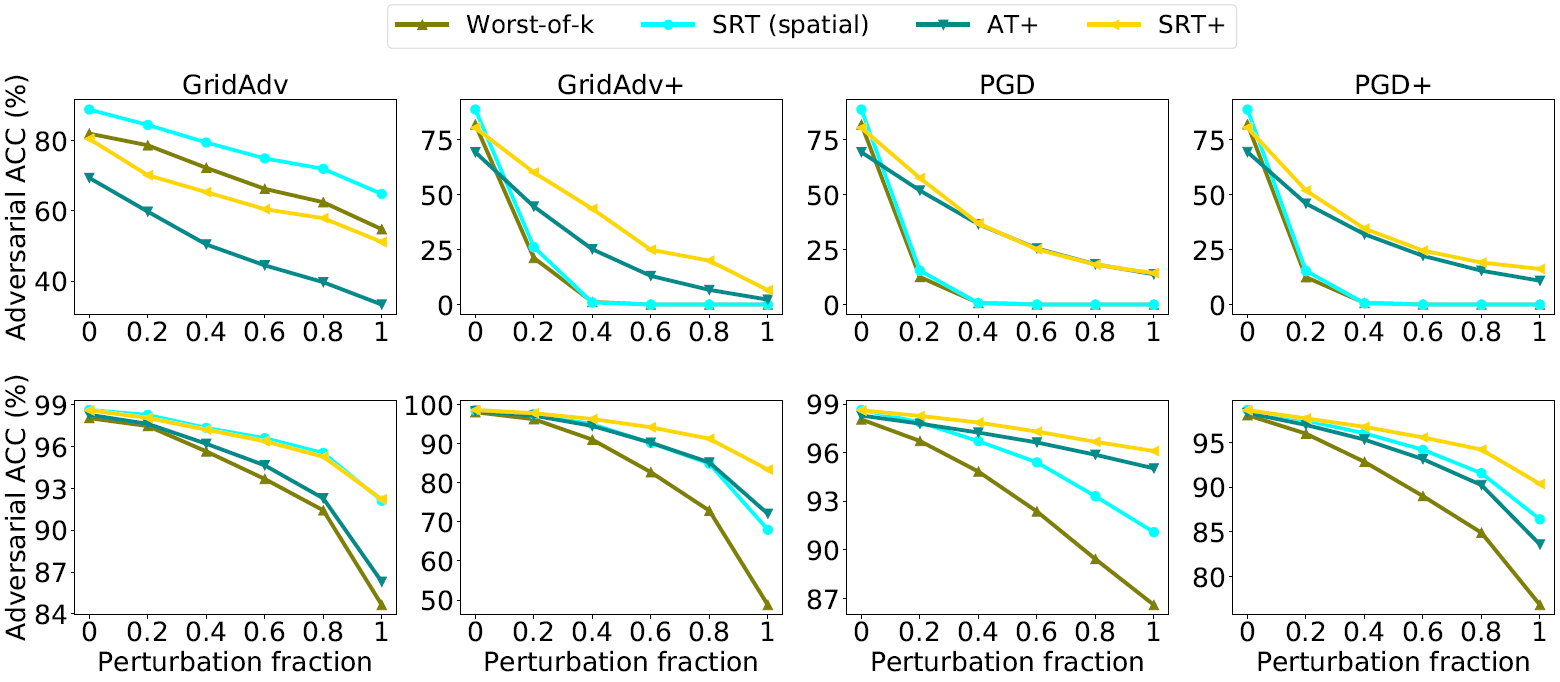}
\caption{Compound adversarial attacks and defenses. \textbf{First row}: results on the CIFAR-10 dataset. \textbf{Second row}: results on the MNIST dataset. The maximum perturbation size is the same as the one used in previous experiments. Perturbation fraction represents the ratio of the current perturbation size to the previous maximum perturbation size.}
\label{fig5}
\vspace{-1em}
\end{figure*}

\subsection{Compound Adversarial Attack and Defense}
\textbf{Settings.} Here we evaluate the adversarial robustness to the attack consisting of multiple types of adversarial attacks, dubbed compound attack\footnote{We notice that the name ``compound attack'' was also used in the area of network security, indicating the combined attack of multiple cyber attacks. \cite{guruswamy2012compound}}. 
We adopt two types of compound attacks, including \textbf{(1)} combining the spatial attack GridAdv \cite{engstrom2019} on the top of the pixel-wise attack PGD \cite{kurakin2016} (dubbed PGD+), and \textbf{(2)} combining PGD on the top of GridAdv (dubbed GridAdv+). 
Accordingly, we also provide two compound defenses, including \textbf{(1)} combining the spatial defense Worst-of-$k$ \cite{engstrom2019} on the top of the pixel-wise defense AT \cite{madry2017} (AT+), and \textbf{(2)} combining the spatial SRT on the top of the pixel-wise SRT (SRT+). Other detailed settings will be presented in the \textbf{Appendix (Section 6)}.

\textbf{Results.} As shown in Figure \ref{fig5}, compared to the single type of attack, compound adversarial attacks (GridAdv+ and PGD+) have a much stronger threat under the same conditions. This phenomenon indicates that by simply combining different types of attacks, a powerful attack can be constructed, which poses a huge threat to DNNs. In particular, the spatial adversarial defenses have limited benefit to the pixel-wise defense, which can be observed by the subgraphs in third column. Especially on the CIFAR-10 dataset, the adversarial accuracy could quickly drop to zero under PGD attack with small perturbation fraction. This again confirms that a single type of defense may not have much effect on defending against another type of attack. Besides, the results also suggest that the model trained with the general perturbed neighborhood (\ie, AT+ and SRT+) could simultaneously defend different types of perturbations, \ie, the compound attack. 
Compared to single-type defense methods, compound defenses achieve significant improvements defending against compound attacks, as well good defending performance to single attacks.
In addition, in the case of compound defense, our method (SRT+) is also better than the extension of previous methods (AT+).

\subsection{The Effect of Unlabeled Data}\label{unlabelimpact}

In this section, we use PGD and GridAdv to evaluate the adversarial robustness under pixel-wise and spatial scenario respectively. Except for the number of unlabeled samples, all settings are the same as those used in Section \ref{spatialdefense}-\ref{pixeldefense}.

\begin{figure*}[ht]
\small
\vspace{-1em}
\centering
\subfigure[CIFAR-10]{
\includegraphics[width=0.49\textwidth]{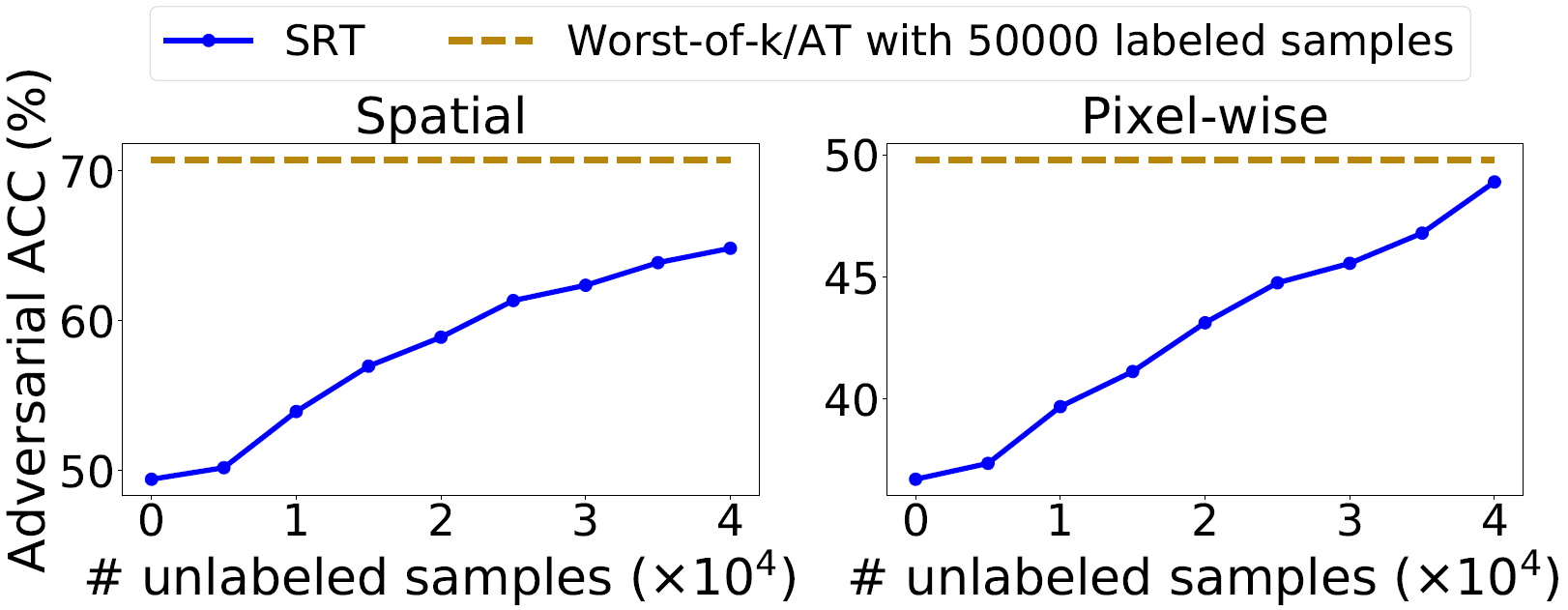}}
\subfigure[MNIST]{
\includegraphics[width=0.49\textwidth]{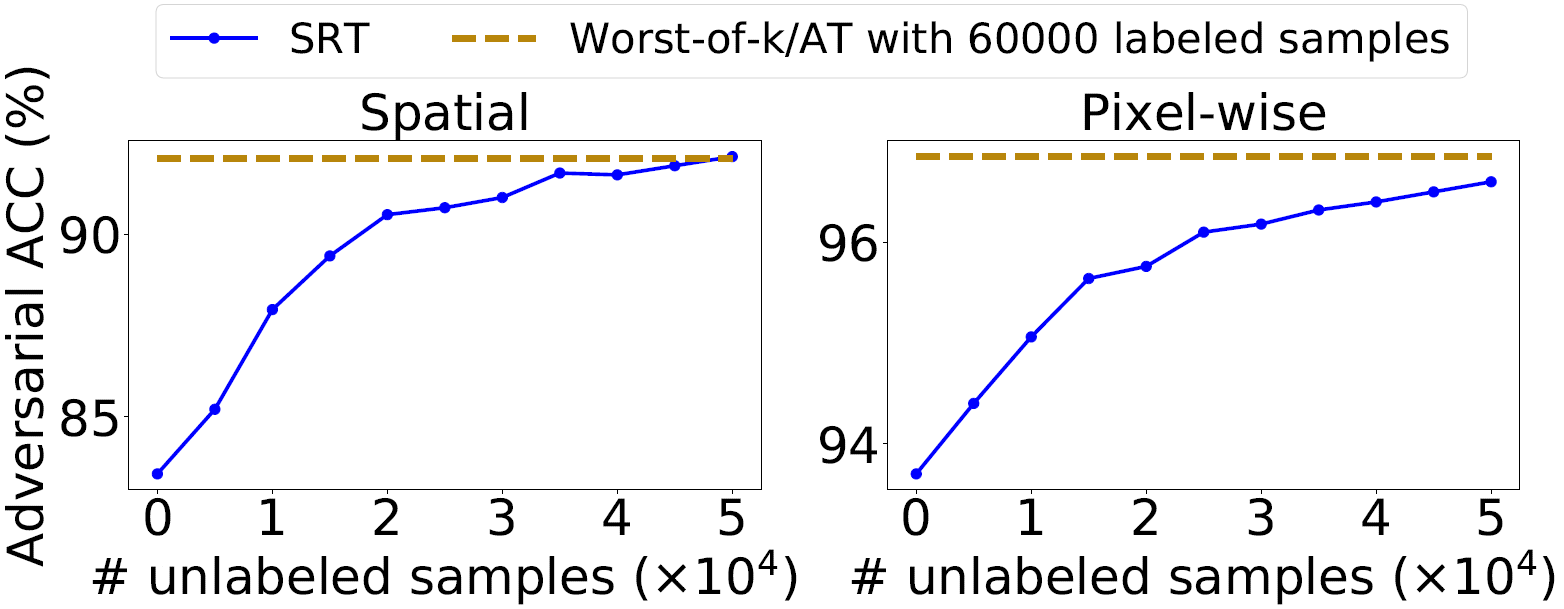}}
\vspace{-0.5em}
\caption{Adversarial accuracy w.r.t the number of unlabeled data.}
\vspace{-1em}
\label{fig2} 
\end{figure*}

\begin{figure*}[ht]
\small
\vspace{-1.2em}
\centering
\subfigure[Spatial adversarial scenario]{
\includegraphics[width=0.48\textwidth]{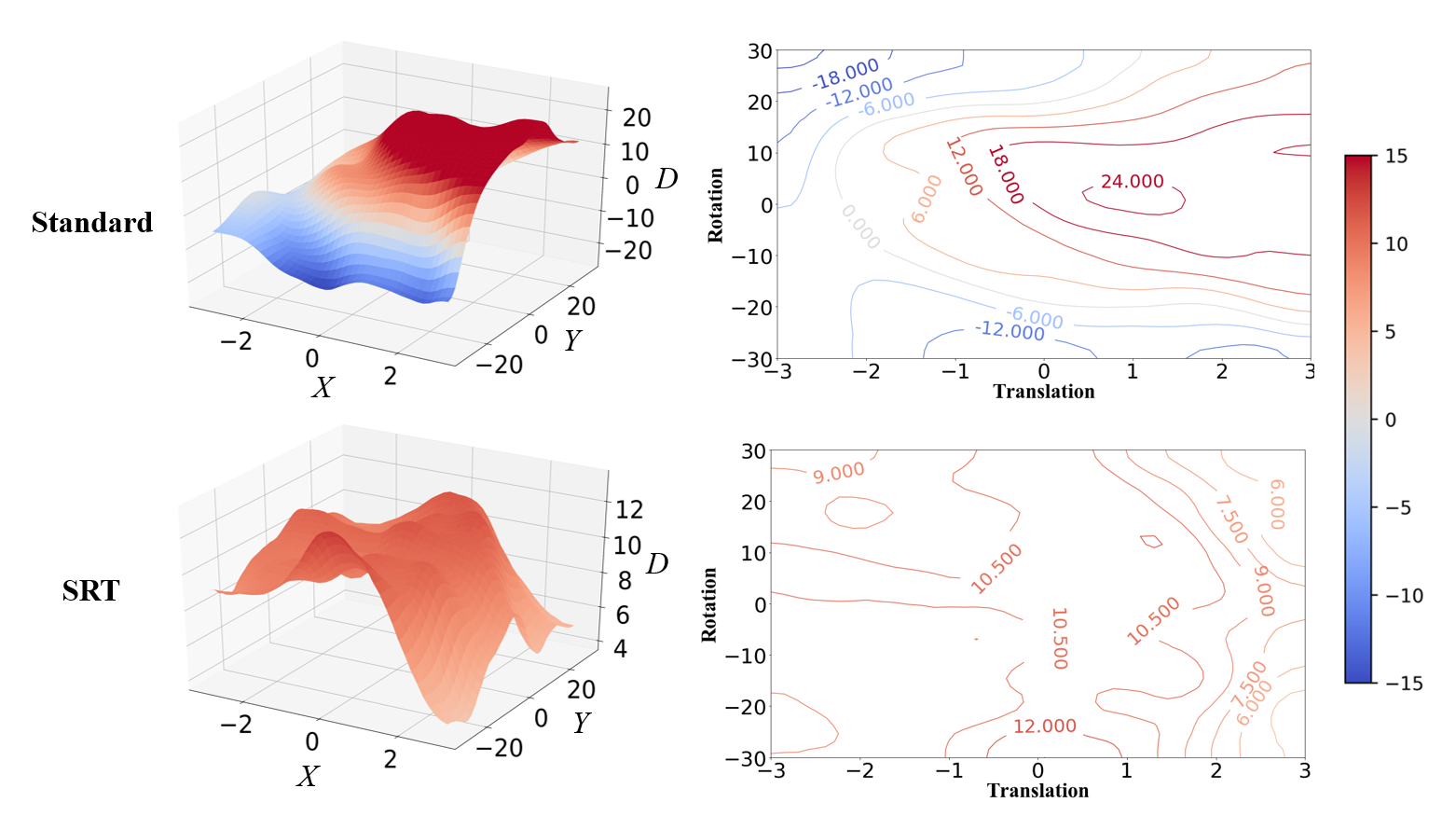}}
\subfigure[Pixel-wise adversarial scenario]{
\includegraphics[width=0.48\textwidth]{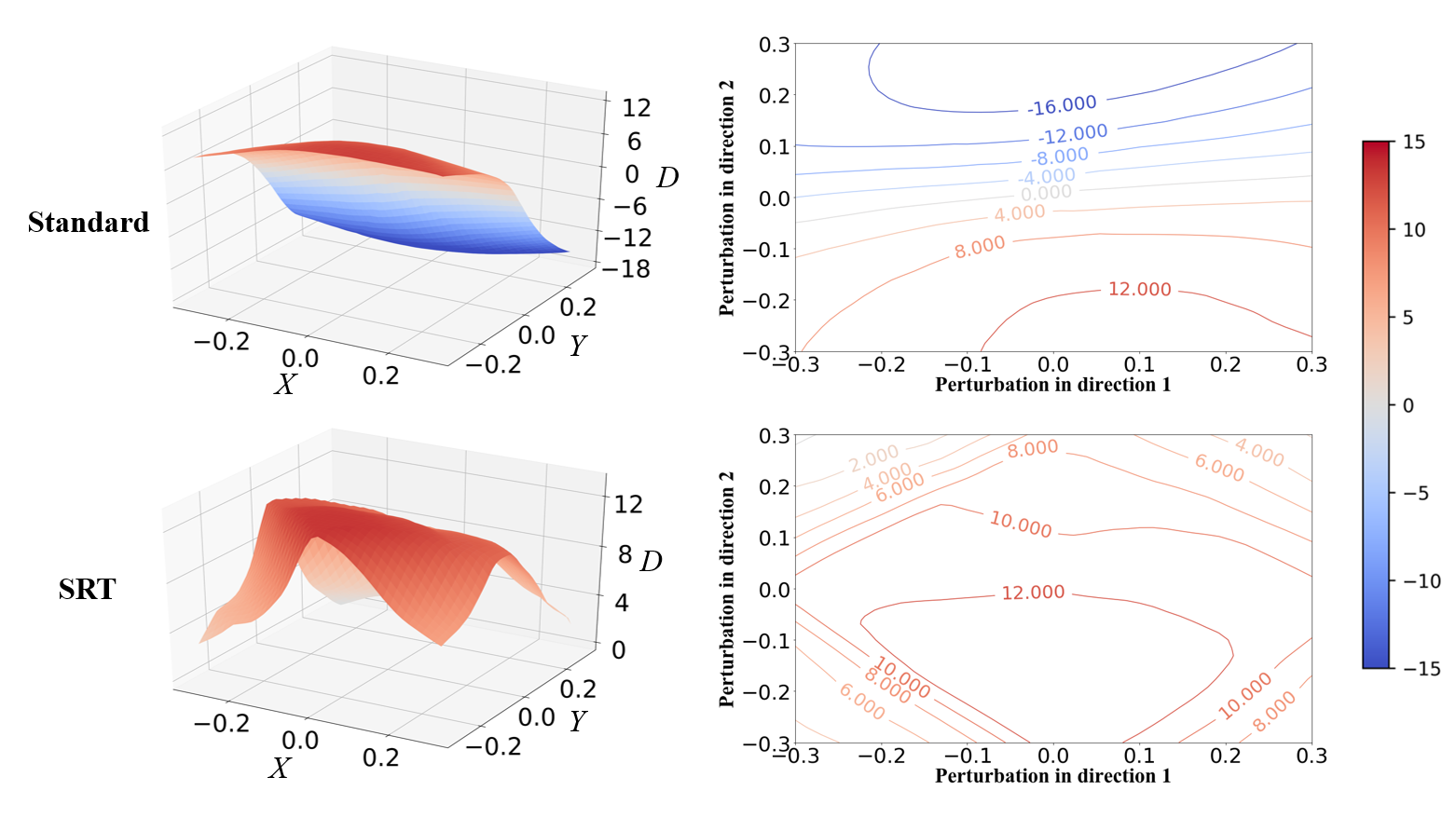}}
\vspace{-0.5em}
\caption{Comparison between the decision surfaces of the SRT and those of the standard training model on the MNIST dataset. \textbf{First row}: the 3D decision surface and their corresponding 2D version of the standard training model. \textbf{Second row}: the decision surface of the model trained with SRT. In the 3D decision surfaces, the $X$ and $Y$ axis represent two different perturbation directions and its value indicates the perturbation size, while the $Z$-axis indicates the decision value. If and only if the decision value is positive, the prediction is correct. In those figures, the red area indicates the correctly classified region, while the blue area is the misclassified region.} 
\label{fig3}
\vspace{-1.5em}
\end{figure*}

As shown in Figure \ref{fig2}, the adversarial accuracy of models trained with SRT increases with the number of unlabeled samples in all settings. Moreover, leveraging large amounts of unlabeled examples, SRT achieves similar adversarial robustness to the supervised adversarial defense methods (AT and Worst-of-$k$) trained with all labeled samples in the original training set (\ie, 60,000 samples on MNIST, and 50,000 samples on CIFAR-10). 
In particular, in the setting of spatial attacks on MNIST, the adversarial accuracy of SRT-based model trained with 10,000 labeled samples and 50,000 unlabeled samples even exceeds that of Worst-of-$k$ with 60,000 labeled samples. Besides, the adversarial accuracy still has an upward trend at the end of the curve (\ie, using all unlabeled data), which implies that the robustness can be further improved if additional samples are utilized.

\vspace{-0.3em}
\subsection{Decision Surface}
\vspace{-0.3em}

In this section, we verify the effectiveness of SRT through visualizing of the geometry of the decision surface on MNIST dataset. The decision value $D$ of a sample $\bm{x}$ is defined as $D(\bm{x})=p_y - \max_{i \neq y} p_i$, where $f_{\bm{w}}(\bm{x}) = [p_1; \cdots; p_K]$, $D(\bm{x})>0$ indicates that the prediction of $\bm{x}$ is correct and vice versa. We visualize the decision surface instead of the loss surface, since it can be further used to evaluate the prediction performance. The decision surface on CIFAR-10, and the visualization of loss surfaces is shown in the \textbf{Appendix (Section 7)}.

As shown in Figure \ref{fig3}, the geometric property of decision surfaces of SRT-based model and that of the standard training model are significantly different. Compared to SRT, the decision surfaces of the standard training model have sharper peaks and larger slopes, which explains that the prediction of this model is vulnerable to small perturbation. In contrast, the surfaces of SRT are relatively flat and located on a plateau with positive decision confidence around the benign sample, though the decision surface in the spatial scenario is more rugged than that in the pixel-wise scenario. Consequently, the output of SRT still lies in the region of correct classification if perturbed with a certain range, which explains the adversarial robustness of SRT.

\vspace{-0.5em}
\section{Conclusion}
\vspace{-0.5em}
In this paper, we propose a novel defense method, dubbed robust training (RT), by jointly minimizing the standard risk and robust risk. We prove that $R_{adv}$ is upper-bounded by $R_{stand} + R_{rob}$, which implies that RT has the similar effect as adversarial training. In addition, we extend RT to the semi-supervised mode (\ie, SRT) to further enhance the adversarial robustness, due to the fact that the robust risk is independent of the ground-truth label. Moreover, we extend the $\ell_{p}$-bounded neighborhood to a general case, which covers different types of perturbations. Consequently, the model trained using SRT with general perturbed neighborhood could simultaneously defend different types of perturbations. Extensive experiments verify the superiority of SRT for defensing pixel-wise or spatial perturbations separately, as well as both perturbations simultaneously. Note that our extension of the perturbed neighborhood is general and not limited to the pixel-wise and spatial scenario. More types of transformations, such as blurring and color shifting, could be also covered. It will be explored in our future work.

\newpage
\section{Broader Impact}

Adversarial defenses focus on the security issues of machine learning (ML), which is the cornerstone before ML algorithms can be safely adopted. As such, as one of the adversarial defense, our work has positive impacts in general.

Specifically, from the aspect of positive broader impacts, \textbf{(1)} our work can be adopted in the classification systems to alleviate the risk of being successfully attacked; \textbf{(2)} our work theoretically analyzed the relationship among standard risk, robust risk, and adversarial risk, which builds a bridge between the robust optimization community and the adversarial learning community. As such, it may probably promote further cooperations in these two areas; \textbf{(3)} we also verified the potential of unlabeled data in the adversarial defense and proposed a method to do so, which may inspire deeper considerations about how to generalize our approach in other applications ($e.g.$, video classification); \textbf{(4)} we also demonstrate the serious threat of compound adversarial attack and propose a method to alleviate it, which may inspire more comprehensive research on this area; \textbf{(5)} Since the adversarial vulnerability comes from the difference between human visual system and algorithms, our work may provide a new angle toward understanding such difference. 

From the aspect of negative broader impacts, \textbf{(1)} attackers may adopt the proposed compound approach to create more malicious attacks; \textbf{(2)} the proposed method further promotes the requirement of (unlabeled) data, which may further enhance the concerns about the data privacy.

\bibliographystyle{unsrt}
\bibliography{ref}

\newpage

\begin{center}
    \begin{Large}
        \textbf{Appendix}
    \end{Large}
\end{center}

\begin{appendices}

\setcounter{lemma}{0}
\setcounter{theorem}{0}

\section{Proofs}

\begin{lemma}\label{threerisk}
Standard Risk, adversarial risk and robust risk of a sample $\bm{x}$ are correlated. Specifically,
\begin{equation}
    R_{adv}(\bm{x}) = R_{stand}(\bm{x}) + \left(1 - R_{stand}(\bm{x})\right)R_{rob}(\bm{x}).  
\end{equation}
\end{lemma}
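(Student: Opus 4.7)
The plan is to prove the identity by a two-case analysis on the value of $R_{stand}(\bm{x}) \in \{0, 1\}$, exploiting the fact that all three quantities, when restricted to a single sample $\bm{x}$, reduce to indicator or max-of-indicator expressions. I would first write down the single-sample versions explicitly: $R_{stand}(\bm{x}) = \mathbb{I}\{C(\bm{x}) \neq y\}$, $R_{adv}(\bm{x}) = \max_{\bm{x}' \in \mathcal{N}_{\epsilon,\bm{T}}(\bm{x})} \mathbb{I}\{C(\bm{x}') \neq y\}$, and $R_{rob}(\bm{x}) = \max_{\bm{x}' \in \mathcal{N}_{\epsilon,\bm{T}}(\bm{x})} \mathbb{I}\{C(\bm{x}') \neq C(\bm{x})\}$, and note the (mild) assumption that $\bm{x} \in \mathcal{N}_{\epsilon,\bm{T}}(\bm{x})$, which holds for all natural choices of $\bm{T}$ and $dist$ in the remark following Definition~1.

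In Case 1, suppose $C(\bm{x}) \neq y$, so $R_{stand}(\bm{x}) = 1$. Then taking $\bm{x}' = \bm{x}$ in the maximum defining $R_{adv}(\bm{x})$ already yields value $1$, so $R_{adv}(\bm{x}) = 1$. The right-hand side becomes $1 + (1-1) R_{rob}(\bm{x}) = 1$, matching. In Case 2, suppose $C(\bm{x}) = y$, so $R_{stand}(\bm{x}) = 0$. Then for every $\bm{x}' \in \mathcal{N}_{\epsilon,\bm{T}}(\bm{x})$, the events $\{C(\bm{x}') \neq y\}$ and $\{C(\bm{x}') \neq C(\bm{x})\}$ coincide, so the indicators agree pointwise and their maxima over $\mathcal{N}_{\epsilon,\bm{T}}(\bm{x})$ are equal, giving $R_{adv}(\bm{x}) = R_{rob}(\bm{x})$. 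The right-hand side is $0 + (1-0) R_{rob}(\bm{x}) = R_{rob}(\bm{x})$, again matching.

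Combining the two cases yields the identity. I expect no real obstacle here: the statement is essentially a disjunctive decomposition of the event ``$\bm{x}'$ is misclassified'' into ``$\bm{x}$ is already misclassified'' and ``$\bm{x}$ is correctly classified but some neighbor disagrees with $\bm{x}$,'' and the factor $(1 - R_{stand}(\bm{x}))$ is just the indicator of the latter precondition. The only minor subtlety worth flagging in the write-up is the implicit assumption $\bm{x} \in \mathcal{N}_{\epsilon,\bm{T}}(\bm{x})$, which is used in Case~1 to conclude $R_{adv}(\bm{x}) = 1$ whenever $R_{stand}(\bm{x}) = 1$; this holds for all instantiations discussed in the paper (e.g., $\bm{T}(\bm{x};0) = \bm{x}$ with $dist(\bm{x},\bm{x}) = 0 \leq \epsilon$), so it is safe to invoke without further comment.
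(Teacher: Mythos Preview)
Your proposal is correct and is essentially the same argument as the paper's: both hinge on the binary value of $R_{stand}(\bm{x})$ and on the inclusion $\bm{x}\in\mathcal{N}_{\epsilon,\bm{T}}(\bm{x})$ to force $R_{adv}(\bm{x})=1$ when $C(\bm{x})\neq y$, then observe that $R_{adv}$ and $R_{rob}$ coincide when $C(\bm{x})=y$. The only cosmetic difference is that the paper phrases the split as a disjoint-event decomposition $\mathbb{I}\{A\}=\mathbb{I}\{A\cap\overline{C}\}+\mathbb{I}\{A\cap C\}$ (with $C$ the event $C(\bm{x})=y$) rather than as an explicit two-case argument, but the content is identical.
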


\begin{proof}

According to the definition of $R_{adv}$ (see Definition 2 in the main manuscript), we have 
\begin{equation}\label{step1}
    R_{adv}(\bm{x})  = \max \limits_{\bm{x'} \in \mathcal{N}_{\delta, \bm{T}}(\bm{x})} \mathbb{I}\{C(\bm{x'}) \neq y\}.
\end{equation}
Since $\max \limits_{\bm{x'} \in \mathcal{N}_{\delta, \bm{T}}(\bm{x})} \mathbb{I}\{C(\bm{x'}) \neq y\} \in \{0, 1\}$, and its value is 1 \emph{iff} $ \exists \bm{x}' \in \mathcal{N}_{\delta, \bm{T}}(\bm{x})\, s.t. \, C(\bm{x}') \neq y$. Therefore, we have
\begin{equation}\label{step2}
    (\ref{step1}) = \mathbb{I}\left\{ \exists \bm{x}' \in \mathcal{N}_{\delta, \bm{T}}(\bm{x})\, s.t. \, C(\bm{x}') \neq y \right\}.
\end{equation}

Event $\bm{A}$: $\exists \bm{x}' \in \mathcal{N}_{\delta, \bm{T}}(\bm{x})\, s.t. \, C(\bm{x}') \neq y$ can be divided into two disjoint sub-events, as follows:
$$
\begin{array}{c}
     \bm{A_1}: (\exists \bm{x}' \in \mathcal{N}_{\delta, \bm{T}}(\bm{x})\, s.t. \, C(\bm{x}') \neq y) \cap (C(\bm{x}) \neq y),  \\
     \bm{A_2}: (\exists \bm{x}' \in \mathcal{N}_{\delta, \bm{T}}(\bm{x})\, s.t. \, C(\bm{x}') \neq y) \cap (C(\bm{x}) = y). 
\end{array}
$$
Let $\bm{B}: \exists \bm{x}' \in \mathcal{N}_{\delta, \bm{T}}(\bm{x})\, s.t. \, C(\bm{x}') \neq y$, $\bm{C}: C(\bm{x}) = y$, now we prove that $\bm{A_1}$ is true if and only if event $\overline{\bm{C}}: C(\bm{x}) \neq y$ is true. 

\noindent {\bf 1)} Suppose $\overline{\bm{C}}$ is true, since $\bm{x} \in \mathcal{N}_{\delta, \bm{T}}(\bm{x})$, we have
$$
\exists \bm{x}' \in \mathcal{N}_{\delta, \bm{T}}(\bm{x})\, s.t. \, C(\bm{x}') \neq y
$$
holds, $i.e.$, $\bm{B}$ is true, therefore $\bm{A_1} = \bm{B} \cap \overline{\bm{C}}$ is true.

{\bf 2)} Suppose $\overline{\bm{C}}$ is false, then $\bm{A_1} = \bm{B} \cap \overline{\bm{C}}$ is false.

{\bf 3)} Suppose $\bm{A_1} = \bm{B} \cap \overline{\bm{C}}$ is true, then $\overline{\bm{C}}$ is true.

{\bf 4)} Suppose $\bm{A_1} = \bm{B} \cap \overline{\bm{C}}$ is false, we can prove that $\overline{\bm{C}}$ is false by seeking the contradiction.

\noindent Thus, we have
\begin{equation}\label{case1}
    \mathbb{I}\{\bm{A_1}\}=\mathbb{I}\{C(\bm{x}) \neq y\}.
\end{equation}
Since $\bm{A_2} = \bm{B} \cap \bm{C} = (\bm{B} \cap \bm{C}) \cap \bm{C}$ and
$$
    (\bm{B} \cap \bm{C}): \exists \bm{x}' \in \mathcal{N}_{\delta, \bm{T}}(\bm{x})\, s.t. \, C(\bm{x}') \neq C(\bm{x}),
$$
\begin{equation}
\begin{array}{cl}\label{case2}
    \mathbb{I}\{\bm{A_2}\} & = \mathbb{I}\{\bm{C}\} \times \mathbb{I}\{\bm{B}\cap \bm{C}\} \\
     & = \mathbb{I}\{C(\bm{x}) = y\} \times \mathbb{I}\left\{ \exists \bm{x}' \in \mathcal{N}_{\delta, \bm{T}}(\bm{x})\, s.t. \, C(\bm{x}') \neq C(\bm{x}) \right\}.
\end{array}
\end{equation}
Therefore, combining Eqs. (\ref{case1}) and (\ref{case2}), we obtain
\begin{equation}\label{step4}
\begin{array}{rl}
    (\ref{step2}) & =  \mathbb{I}\{\bm{A_1}\} + \mathbb{I}\{\bm{A_2}\}\\
    & = \mathbb{I}\{C(\bm{x}) \neq y\} + \mathbb{I}\{C(\bm{x})=y\} \times \\
     & \quad \mathbb{I}\left\{ \exists \bm{x}' \in \mathcal{N}_{\delta, \bm{T}}(\bm{x})\, s.t. \, C(\bm{x}') \neq C(\bm{x}) \right\}.
\end{array}
\end{equation}
Similar to Eq. (\ref{step2}), we have 
\begin{equation} \label{step7}
   \mathbb{I}\left\{ \exists \bm{x}' \in \mathcal{N}_{\delta, \bm{T}}(\bm{x})\, s.t. \, C(\bm{x}') \neq C(\bm{x}) \right\} = 
   \max \limits_{\bm{x'} \in \mathcal{N}_{\delta, \bm{T}}(\bm{x})} \mathbb{I}\{C(\bm{x'}) \neq C(\bm{x})\}.
\end{equation}
Combining Eqs. (\ref{step4}) and (\ref{step7}), we have
\begin{equation}\label{step5}
    (\ref{step4})  = \mathbb{I}\{C(\bm{x}) \neq y\} + \mathbb{I}\{C(\bm{x})=y\} \times \max \limits_{\bm{x'} \in \mathcal{N}_{\delta, \bm{T}}(\bm{x})} \mathbb{I}\{C(\bm{x'}) \neq C(\bm{x})\}.
\end{equation}
According to the definitions of $R_{adv}$, $R_{rob}$ and $R_{stand}$,
\begin{equation}
    (\ref{step5}) = R_{stand}(\bm{x}) + \left(1 - R_{stand}(\bm{x})\right)R_{rob}(\bm{x}).
\end{equation}
\end{proof}

\begin{table}[!ht]
\vspace{-0.3em}
\small
  \begin{minipage}{0.49\linewidth}
    \centering
    \caption{Comparison between SRT and RT under spatial attacks on MNIST dataset.}
    \begin{tabular}{l|c|c|c}
    \hline
        & Clean & RandAdv & GridAdv  \\ \hline
    Standard & 97.19   &  71.00  &  40.49 \\ \hline
    RT ($\lambda = 0.15$) & 98.47    & 92.88   & 72.85  \\
    SRT ($\lambda = 0.15$) & $\bm{98.61}$    & $\bm{96.85}$   & $\bm{91.52}$  \\ \hline
    RT ($\lambda = 0.20$) & 98.33    & 93.66   & 76.68  \\
    SRT ($\lambda = 0.20$) & $\bm{98.64}$    & $\bm{97.02}$   & $\bm{92.12}$ \\ \hline
    RT ($\lambda = 0.25$) &  98.29    & 93.91  & 78.00 \\
    SRT ($\lambda = 0.25$) & $\bm{98.63}$     & $\bm{96.91}$    & $\bm{91.70}$ \\ \hline
    RT ($\lambda = 0.30$) & 98.42    & 93.86   & 77.74 \\
    SRT ($\lambda = 0.30$) & $\bm{98.62}$     & $\bm{97.08}$    & $\bm{91.44}$  \\ \hline
    \end{tabular}
    \label{ap_SRTbetter(spatial)}
  \end{minipage}
  \quad
  \begin{minipage}{0.47\linewidth}
    \centering
    \caption{Comparison between SRT and RT under pixel-wise attacks on MNIST dataset.}
    \begin{tabular}{l|c|c|c}
    \hline
        & Clean & FGSM & PGD  \\ \hline
    Standard & 99.02    & 93.80   & 86.12  \\ \hline
    RT ($\lambda = 0.2$) & 99.06   & 97.18   & 95.84 \\
    SRT ($\lambda = 0.2$) & $\bm{99.31}$    & $\bm{98.10}$   & $\bm{97.18}$ \\ \hline
    RT ($\lambda = 0.4$) & 99.14    & 97.57   & 96.23 \\
    SRT ($\lambda = 0.4$) & $\bm{99.40}$    & $\bm{98.28}$   & $\bm{97.55}$ \\ \hline
    RT ($\lambda = 0.6$) & 99.06    & 97.78   & 96.92 \\
    SRT ($\lambda = 0.6$) & $\bm{99.34}$    & $\bm{98.47}$   & $\bm{97.81}$ \\ \hline
    RT ($\lambda = 0.8$) & 99.11    & 97.90   & 97.06  \\
    SRT ($\lambda = 0.8$) & $\bm{99.35}$     & $\bm{98.53}$    & $\bm{97.86}$ \\ \hline
    \end{tabular}
    \label{ap_SRTbetter(pixel)}
  \end{minipage}
\vspace{-0.3em}
\end{table}

\begin{theorem}
The relationship between the adversarial training and the robust training is as follows
\begin{equation}
    \min_{\boldsymbol{w}} R_{adv}(\mathcal{D}) \leq \min_{\boldsymbol{w}} \{R_{stand}(\mathcal{D}) + R_{rob}(\mathcal{D})\}.
\end{equation}
\end{theorem}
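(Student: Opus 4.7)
The plan is to deduce the theorem from Lemma \ref{lemma: relations among risks} by a simple pointwise bound, then lift the inequality to the dataset level by taking expectations, and finally invoke monotonicity of the minimum to pass from pointwise domination to the inequality on the minimizers.

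First I would start from the per-sample identity given by Lemma \ref{lemma: relations among risks}, namely
\begin{equation*}
R_{adv}(\bm{x}) = R_{stand}(\bm{x}) + \bigl(1 - R_{stand}(\bm{x})\bigr) R_{rob}(\bm{x}).
\end{equation*}
Since $R_{stand}(\bm{x})$ and $R_{rob}(\bm{x})$ are expectations (indeed, values) of $0/1$ indicators, both lie in $[0,1]$. In particular $0 \le 1 - R_{stand}(\bm{x}) \le 1$ and $R_{rob}(\bm{x}) \ge 0$, so $\bigl(1 - R_{stand}(\bm{x})\bigr) R_{rob}(\bm{x}) \le R_{rob}(\bm{x})$, which yields the pointwise bound $R_{adv}(\bm{x}) \le R_{stand}(\bm{x}) + R_{rob}(\bm{x})$.

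Next I would integrate this pointwise inequality against the data distribution $\mathcal{P}_{\mathcal{D}}$. By linearity and monotonicity of expectation (and noting that $\mathcal{P}_{\mathcal{D}'}$ is just the $\bm{x}$-marginal of $\mathcal{P}_{\mathcal{D}}$, so $R_{rob}(\mathcal{D})$ is well defined from the same joint distribution), we obtain the dataset-level inequality
\begin{equation*}
R_{adv}(\mathcal{D}) \;\le\; R_{stand}(\mathcal{D}) + R_{rob}(\mathcal{D}),
\end{equation*}
which holds for every choice of the classifier parameters $\boldsymbol{w}$. Finally I would apply the elementary fact that if $h_1(\boldsymbol{w}) \le h_2(\boldsymbol{w})$ for all $\boldsymbol{w}$, then $\min_{\boldsymbol{w}} h_1(\boldsymbol{w}) \le \min_{\boldsymbol{w}} h_2(\boldsymbol{w})$, with $h_1 = R_{adv}$ and $h_2 = R_{stand} + R_{rob}$, to conclude the claim.

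There is essentially no hard step here once Lemma \ref{lemma: relations among risks} is in hand; the argument is three lines of bounding and one invocation of monotonicity of $\min$. The only mild care needed is to be explicit that $R_{stand}(\bm{x}), R_{rob}(\bm{x}) \in [0,1]$ so that dropping the factor $(1 - R_{stand}(\bm{x}))$ genuinely loosens the identity into an inequality, and to note that the inequality between the two minima is a strict consequence of the pointwise (in $\boldsymbol{w}$) inequality, not of any exchange of $\min$ and expectation.
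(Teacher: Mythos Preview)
Your proposal is correct and follows essentially the same route as the paper: apply Lemma \ref{lemma: relations among risks}, use $R_{stand}(\bm{x}), R_{rob}(\bm{x}) \in [0,1]$ to drop the factor $(1-R_{stand}(\bm{x}))$, take expectations, and then pass to the minimum. If anything, your justification of the final step (monotonicity of $\min$ under a pointwise-in-$\boldsymbol{w}$ inequality) is more precise than the paper's remark about non-negativity of the risks.
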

\begin{proof}
Since $R_{stand}(\bm{x}), R_{rob}(\bm{x}) \in [0,1]$, 
according to Lemma 1, it is easy to obtain that
\begin{equation}\label{sampleineq}
    R_{adv}(\bm{x})  \leq R_{stand}(\bm{x}) + R_{rob}(\bm{x}). 
\end{equation}
By seeking the expectation on both sides of (\ref{sampleineq}), we obtain 
\begin{equation}\label{rel}
    R_{adv}(\mathcal{D})  \leq R_{stand}(\mathcal{D}) + R_{rob}(\mathcal{D}). 
\end{equation}
Utilizing the fact that all three risks are non-negative, we further obtain 
\begin{equation}
\min_{\boldsymbol{w}} R_{adv}(\mathcal{D}) \leq \min_{\boldsymbol{w}} \{R_{stand}(\mathcal{D}) + R_{rob}(\mathcal{D})\}.
\end{equation}
\end{proof}

\section{Settings for Comparison between RT and SRT}\label{set:compare}
\noindent \textbf {Training Setup.} In the spatial adversarial settings, we use a standard ResNet \cite{he2016} with 4 residual groups with filter size [16, 16, 32, 64] and 5 residual units each. We set the learning rate $\eta_2 = 0.1$, batch size $m_2 = 128$, and run 80,000 iterations. In the pixel-wise experiments, we use the wide residual network WRN-34-10 \cite{zagoruyko2016}, and set the perturbation size $ \epsilon_1= 0.031$, perturbation step size $\alpha_1 = 0.007$, number of iterations $K_1 = 10$ (for the inner maximization sub-problem), learning rate $\eta_1 = 0.1$, batch size $m_1 = 128$, and run 100 epochs in the training dataset. 

\noindent \textbf{Data Preprocessing.} We conduct standard data augmentation techniques for benign images. Specifically, 4-pixel padding is used before performing random crops of size $32\times32$, followed by a left-and-right random flipping with probability 0.5. 

\noindent \textbf {Attack Setup.} We perform RandAdv and GridAdv attack \cite{engstrom2019} to evaluate the spatial adversarial robustness. The spatial perturbation (\ie, rotation and translation) of RandAdv and GridAdv are determined through random sampling and grid search, respectively. Specifically, the maximum rotational perturbation is set as $30^\circ$, and we consider translations of at most 3 pixels. For the GridAdv, we consider 5 and 31 values equally spaced per direction for translation and rotation, respectively. For the RandAdv, we perform a uniformly random perturbation from the attack spaced used in GridAdv. These settings are based on those suggested in \cite{engstrom2019}. To evaluate the pixel-wise adversarial robustness, we apply PGD-20 with step size 0.003 and FGSM under perturbation size 0.031, as suggested in \cite{zhang2019}.

\section{Comparison between SRT and RT on MNIST Dataset}\label{compare:MNIST}
\noindent \textbf{Setup}. We use a simple CNN with four convolutional layers followed by three fully-connected layers \cite{zhang2019} in both pixel-wise and spatial scenario. In spatial scenario, except for the model architecture, the other settings are the same as those stated in Section \ref{set:compare}. In the pixel-wise experiments, we set perturbation size as 0.1, perturbation step size as 0.01, number of iterations as 20, with learning rate $\eta_3 = 0.01$, batch size $m_3 = 128$, and run 50 epochs in the training dataset. We apply PGD-40 with step size 0.005 and FGSM under perturbation size 0.1, which are suggested in \cite{zhang2019}. 

\noindent \textbf {Results.} Similar to the results on the CIFAR-10 dataset, SRT is superior to RT under the same conditions on the MNIST dataset. SRT is also much more robust than RT under stronger attack (PGD and GridAdv), especially in the spatial scenario.

\section{Settings for Spatial Adversarial Defense}
\noindent \textbf{Setup}. 
The training and the attack setup follow the same settings used in Section \ref{set:compare}-\ref{compare:MNIST}. In SRT, the we set $\lambda =0.2$ on both CIFAR-10 and MNIST according to previous experiments, and the hyper-parameters of baseline methods are set according to their paper.

\section{Settings for Pixel-wise Adversarial Defense}
\noindent \textbf{Training Setup}.
The model architectures are the same as those used in Section \ref{set:compare}-\ref{compare:MNIST}. To ensure that all methods have converged, we train 120 epochs in this experiment, and the initial learning rate is changed to 0.05. The number of iterations and the perturbation size is modified to 40 and 0.3 on MNIST respectively, as suggested in \cite{zhang2019}. Other settings are the same as those used in Section \ref{set:compare}-\ref{compare:MNIST} above. In SRT, we set $\lambda = 1$ on both CIFAR-10 and MNIST, and the hyper-parameter of baseline methods are set according to their paper. In particular, there are two default settings discussed in TRADES, we choose the one with better performance ($1/\lambda= 1$ on MNIST and $1/\lambda= 6$ on CIFAR-10). Besides, we observe that when the regularization weight $\beta$ is set to 6, as suggested in the paper, the UAT is hard to learn on the CIFAR-10 dataset. As such, we modify the hyper-parameter $\beta$ to 1 on the CIFAR-10 dataset according to cross-validation. 

\noindent \textbf{Robustness Evaluation Setup}.
We apply FGSM, PGD, momentum iterative attack (MI-FGSM) \cite{dong2018boosting} under $\ell_\infty$ norm in the evaluation, since these methods are similar to the one (PGD) used in solving the inner-maximization in SRT. We also apply some other attack methods, including jacobian saliency map attack (JSMA) \cite{papernot2016limitations}, Carlini \& Wagner attack (C\&W) \cite{carlini2017}, point-wise attack \cite{schott2018}, and decoupled direction and norm attack (DDNA) \cite{rony2019} for the evaluation. All attacks are implemented based on Advertorch \cite{ding2019} and Foolbox \cite{rauber2017}. On the CIFAR-10 dataset, we set perturbation size $\epsilon=0.031$ for FGSM, PGD and MI-FGSM, the step size $\alpha = 0.007$, and the iteration step is set to 10. On the MNIST dataset, we apply PGD-40 and MI-FGSM-40 with step size 0.01 and FGSM under perturbation size 0.3.

\section{Settings for Compound Adversarial Attack and Defense}

\noindent \textbf{Attack Setup}.
There are mainly two different ways to construct a compound attack based on combining different types of attacks. One way is to choose the best attack within many different types of attacks, and the other is to conduct multiple attacks in sequence. The first way is discussed in \cite{tramer2019}, and we discuss the second way in this paper. Let $\bm{\epsilon}=(\epsilon_1, \epsilon_2, \epsilon_3)$ indicates the user-defined maximum perturbation size, where $\epsilon_1, \epsilon_2, \epsilon_3$ is the maximum rotation, maximum translation, and maximum pixel-wise perturbation respectively. We evaluate two types of compound adversarial attacks, including 1) combining GridAdv on the top of PGD (PGD+): $\bm{T}_1(\bm{x})=A(\theta) \cdot \left(\bm{x}+\arg \max_{\bm{r}\in \mathcal{B}_\infty(\epsilon)} \mathcal{L}(f(\bm{x}+\bm{r}), y)\right) + B$ and $dist_1(\bm{T}_1(\bm{x}),\bm{x})=(\theta, \|B\|_{\infty,\infty}) \leq (\epsilon_1, \epsilon_2)$, where $A(\theta) = [\cos{\theta}, -\sin{\theta}; \sin{\theta}, \cos{\theta}]$ and $\mathcal{B}_\infty(\epsilon)=\{\bm{x}|\|\bm{x}\|_\infty \leq \epsilon\}$; and 2) combining PGD on the top of GridAdv (GridAdv+): $\bm{T}_2(\bm{x};\bm{r})=A^{*} \bm{x} + B^{*} +\bm{r}$ and $dist_2(\bm{T}_2(\bm{x}), \bm{x}) = \|\bm{r}\|_\infty \leq \epsilon_3$, where 
$$
(A^{*},B^{*}) = \arg \max_{(A(\theta), B): \theta \leq \epsilon_1, \|B\|_{\infty ,\infty} \leq \epsilon_2} \mathcal{L}(A(\theta)\bm{x}+B,y).
$$

\noindent \textbf{Defense Setup}.
We generate the perturbed samples needed for compound adversarial defense by first performing a pixel-wise attack and then a spatial attack. Specifically, we provide two different compound defenses, including 1) combining Worst-of-$k$ on the top of AT (AT+), and 2) combining spatial SRT on the top of pixel-wise SRT (SRT+).

\newpage

\begin{table}[ht]
\centering
\caption{The computational complexity of all methods discussed in this paper. Note that the hyper-parameter $N, M, I_i$, and $I_o$ may not necessary to be the same across different methods.}
\begin{tabular}{ccc}
\toprule
Standard Training & Worst-of-$k$, KLR, AT, TRADES & UAT, RST, SRT \\ \midrule
$\mathcal{O}\left(N \cdot I_o\right)$ & $\mathcal{O}\left(N \cdot I_i \cdot I_o\right)$ & $\mathcal{O}\left((N+M) \cdot I_i \cdot I_o\right)$   \\ \bottomrule
\end{tabular}
\label{Tab:CC}
\end{table}

\section{Analysis of Computational Complexity}
In this section, we discuss the computational complexity of SRT and all baseline methods.

All baseline defenses are `AT-type', $i.e.$, under the `min-max' framework, and their inner-maximization is solved by PGD. Suppose the number of iterations used in the inner-maximization and the outer-minimization are $I_i$ and $I_o$, respectively. Let $N$, $M$ indicate the number of labeled samples and unlabeled samples, respectively. In general, the computational complexity of all methods is $\mathcal{O}\left((N+M) \cdot I_i \cdot I_o\right)$, while $M=0$ for supervised methods and $I_i=1$ for the standard training. 

In summary, the computational complexity of all methods are shown in Table \ref{Tab:CC}.

\newpage

\begin{figure*}[ht]
\small
\centering
\subfigure[spatial scenario]{
\includegraphics[width=0.47\textwidth]{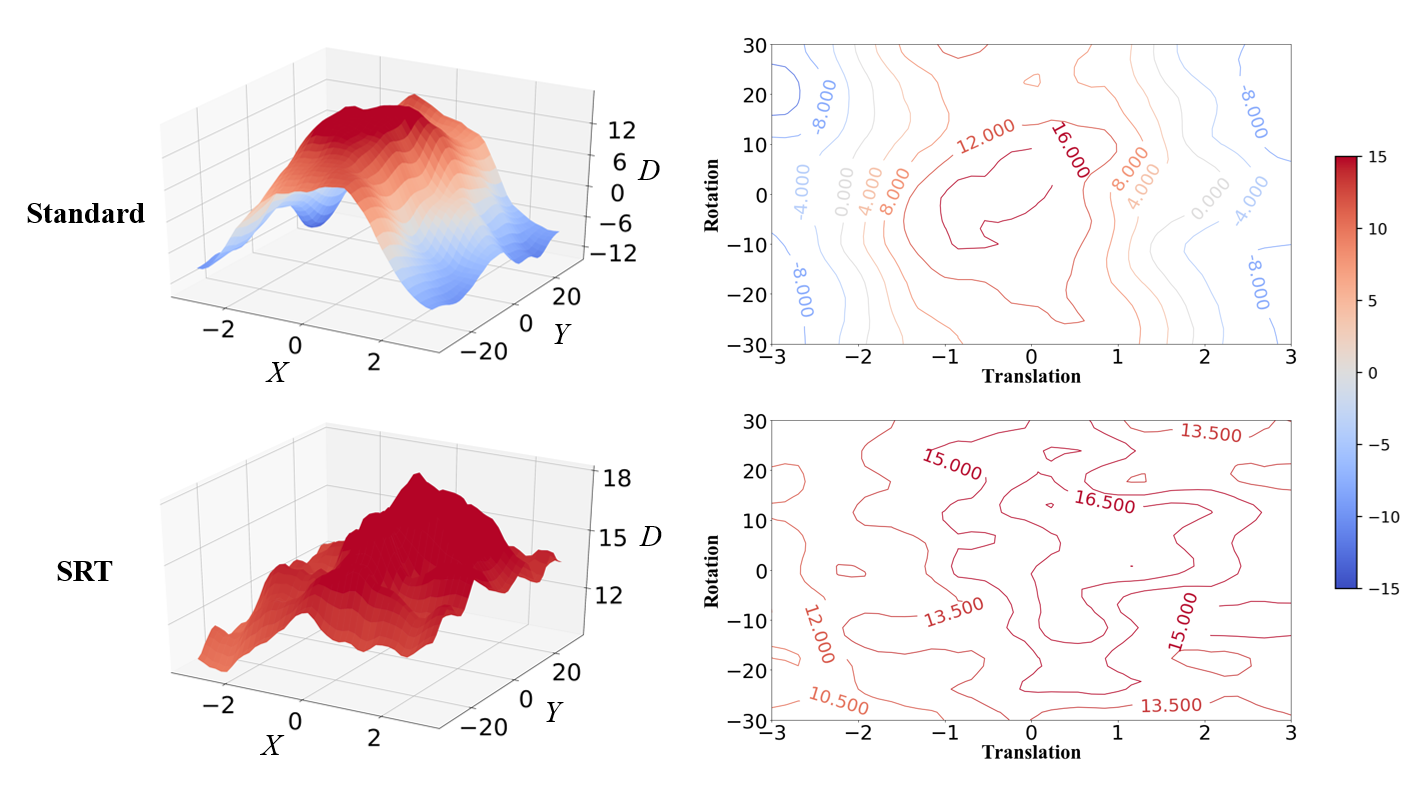}}
\subfigure[pixel-wise scenario]{
\includegraphics[width=0.47\textwidth]{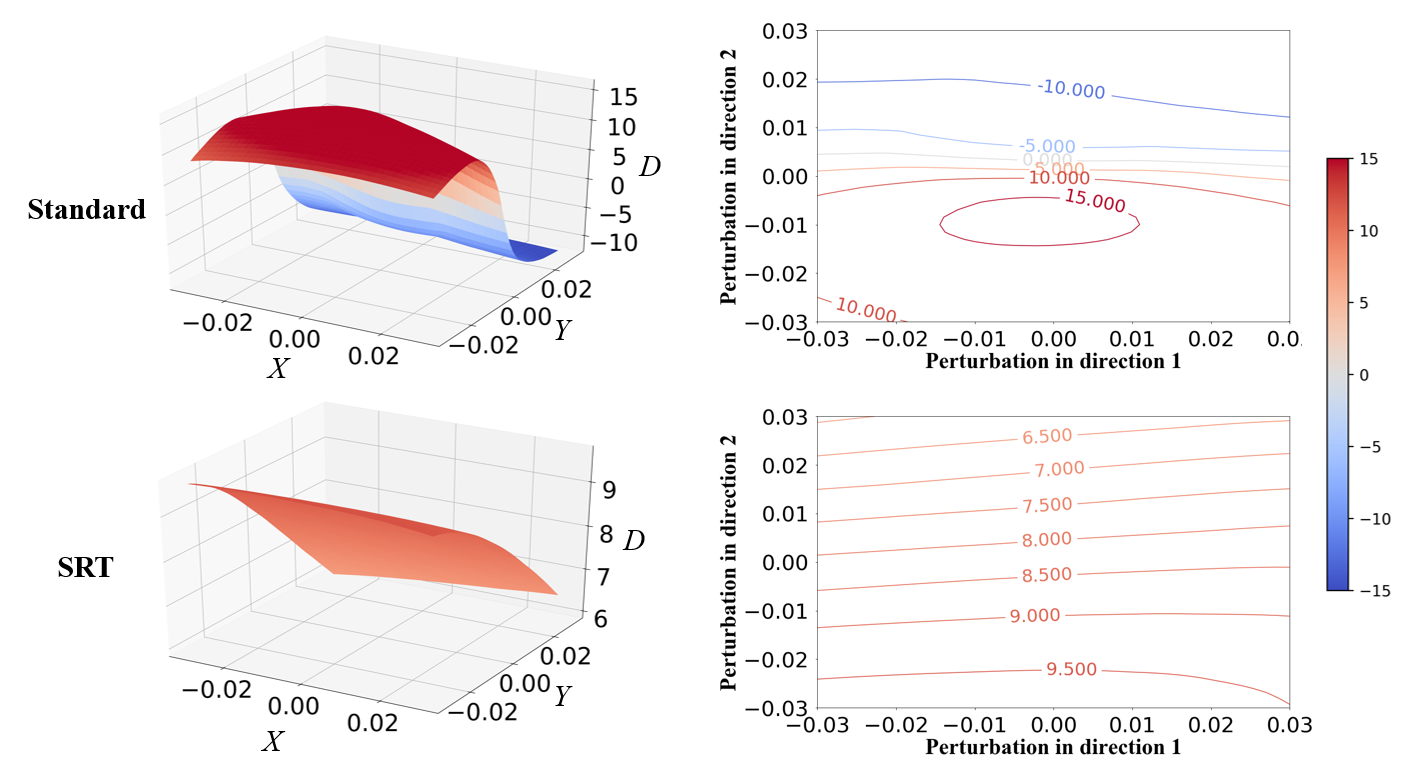}}
\caption{The decision surfaces of SRT-based model and those of the standard training model on the CIFAR-10 dataset. In the 3D decision surfaces, the $X$ and $Y$ axis represent two different perturbation directions and its value indicates the perturbation size, while the $Z$-axis indicates the decision value. If and only if the decision value is positive, the prediction is correct. In those figures, the red area indicates the correctly classified region, while the blue area is the misclassified region.}
\label{ap_fig1}
\end{figure*}

\begin{figure*}[ht]
\small
\centering
\subfigure[spatial scenario]{
\includegraphics[width=0.47\textwidth]{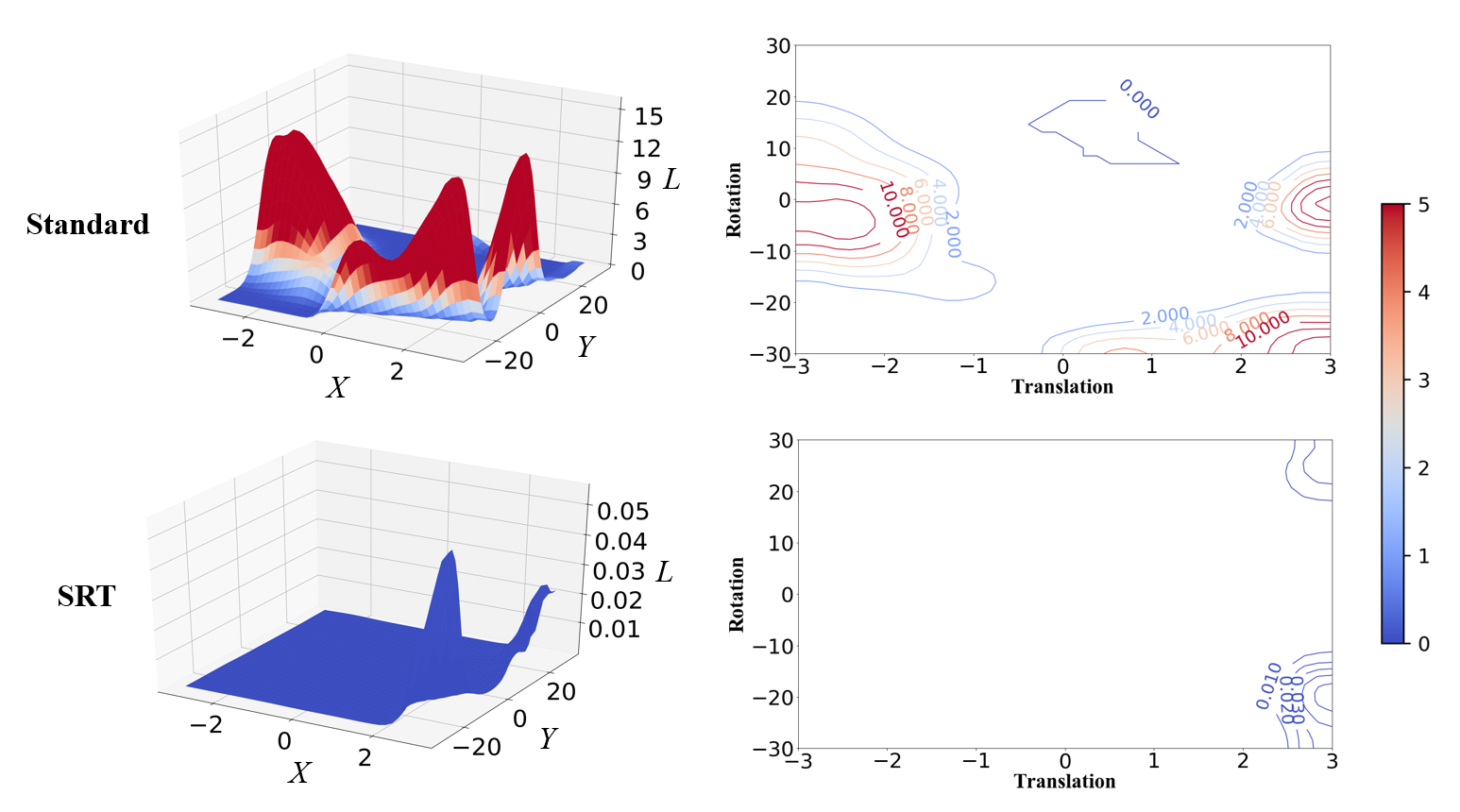}}
\subfigure[pixel-wise scenario]{
\includegraphics[width=0.47\textwidth]{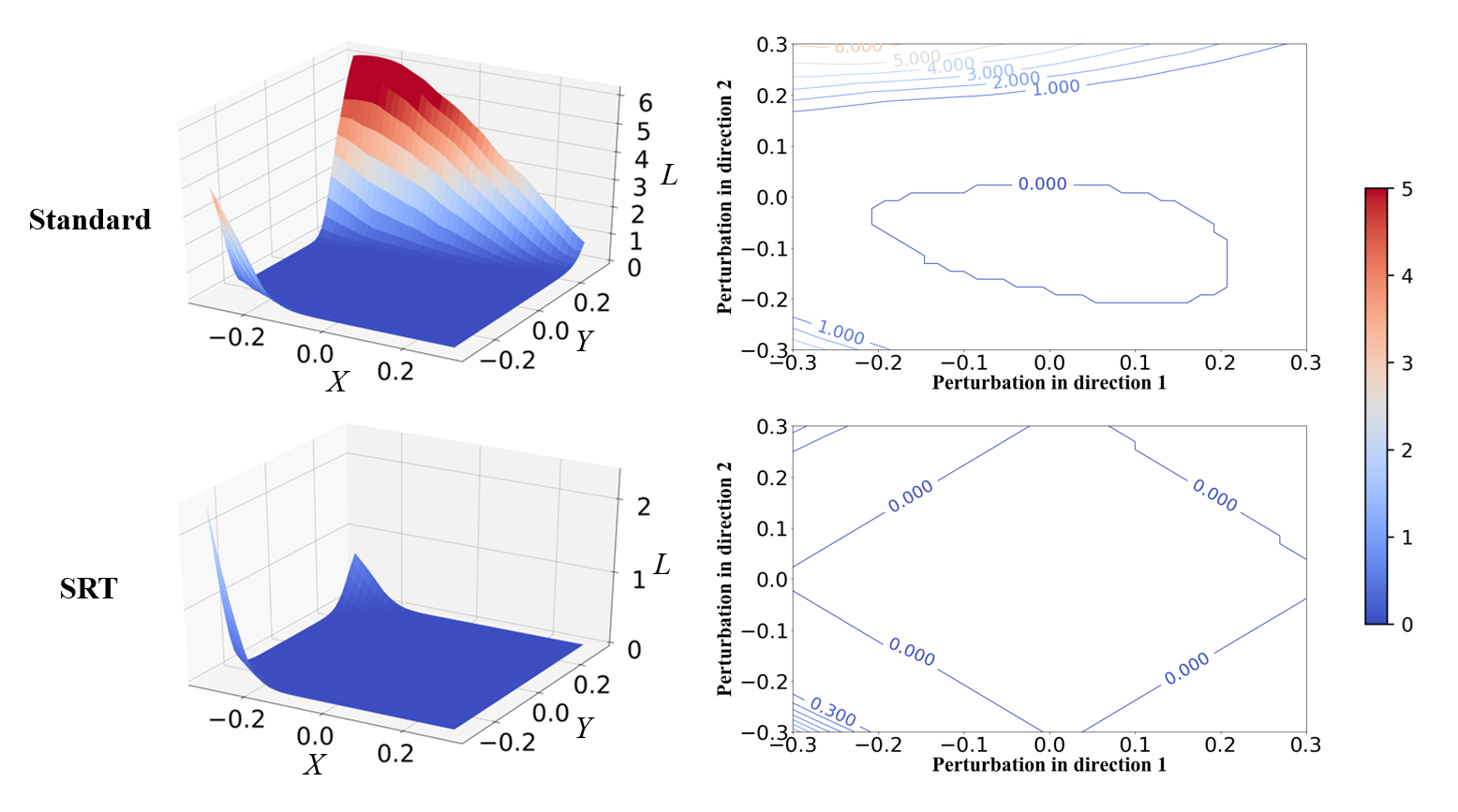}}
\caption{The loss surfaces of SRT-based model and those of the standard training model on the MNIST dataset. In the 3D loss surface, the $X$ and $Y$ axis represent two different perturbation directions and its value indicates the perturbation size, while the $Z$-axis indicates the loss.}
\vskip -0.15in
\label{ap_fig2}
\end{figure*}

\begin{figure*}[!ht]
\small
\centering
\subfigure[spatial scenario]{
\includegraphics[width=0.47\textwidth]{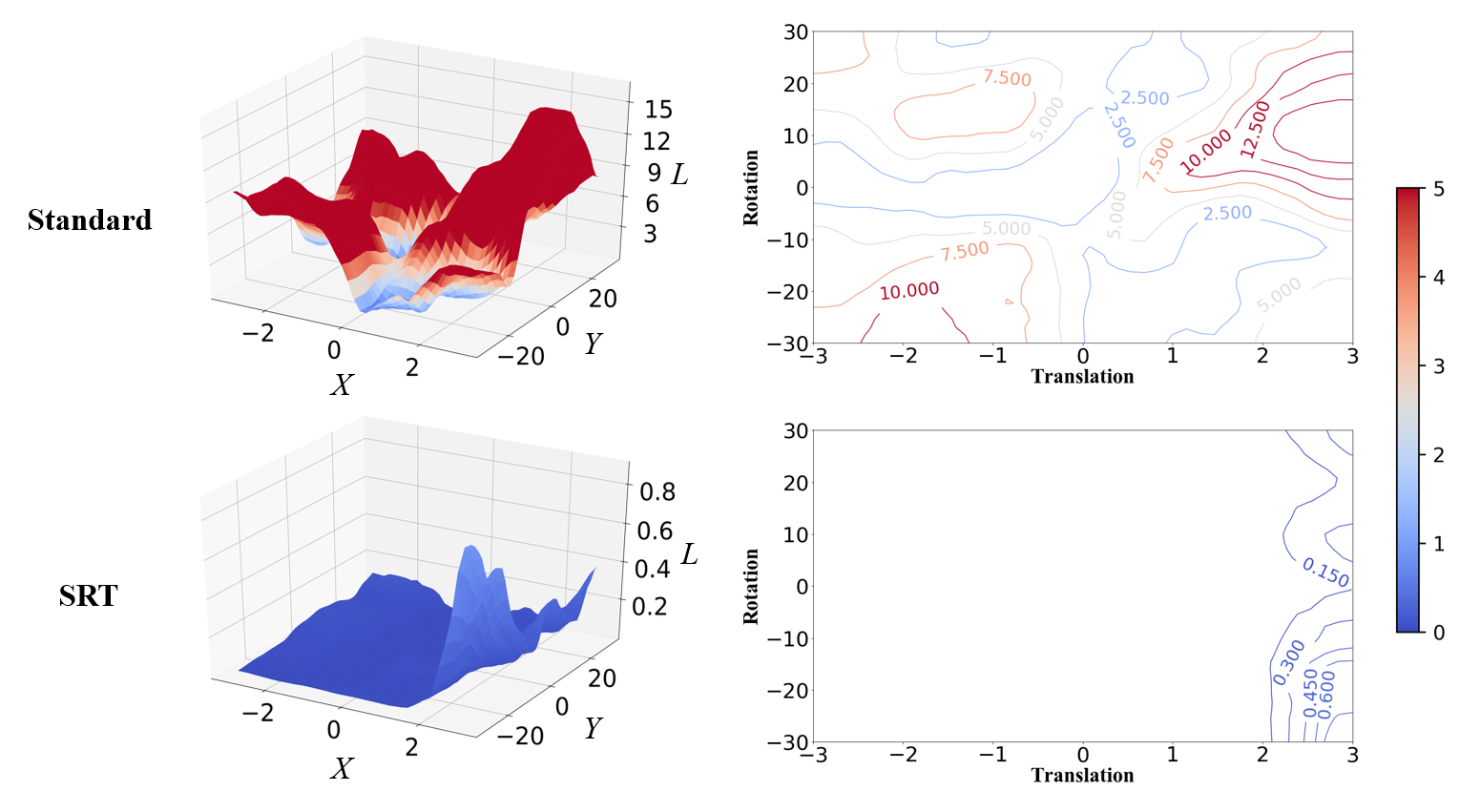}}
\subfigure[pixel-wise scenario]{
\includegraphics[width=0.47\textwidth]{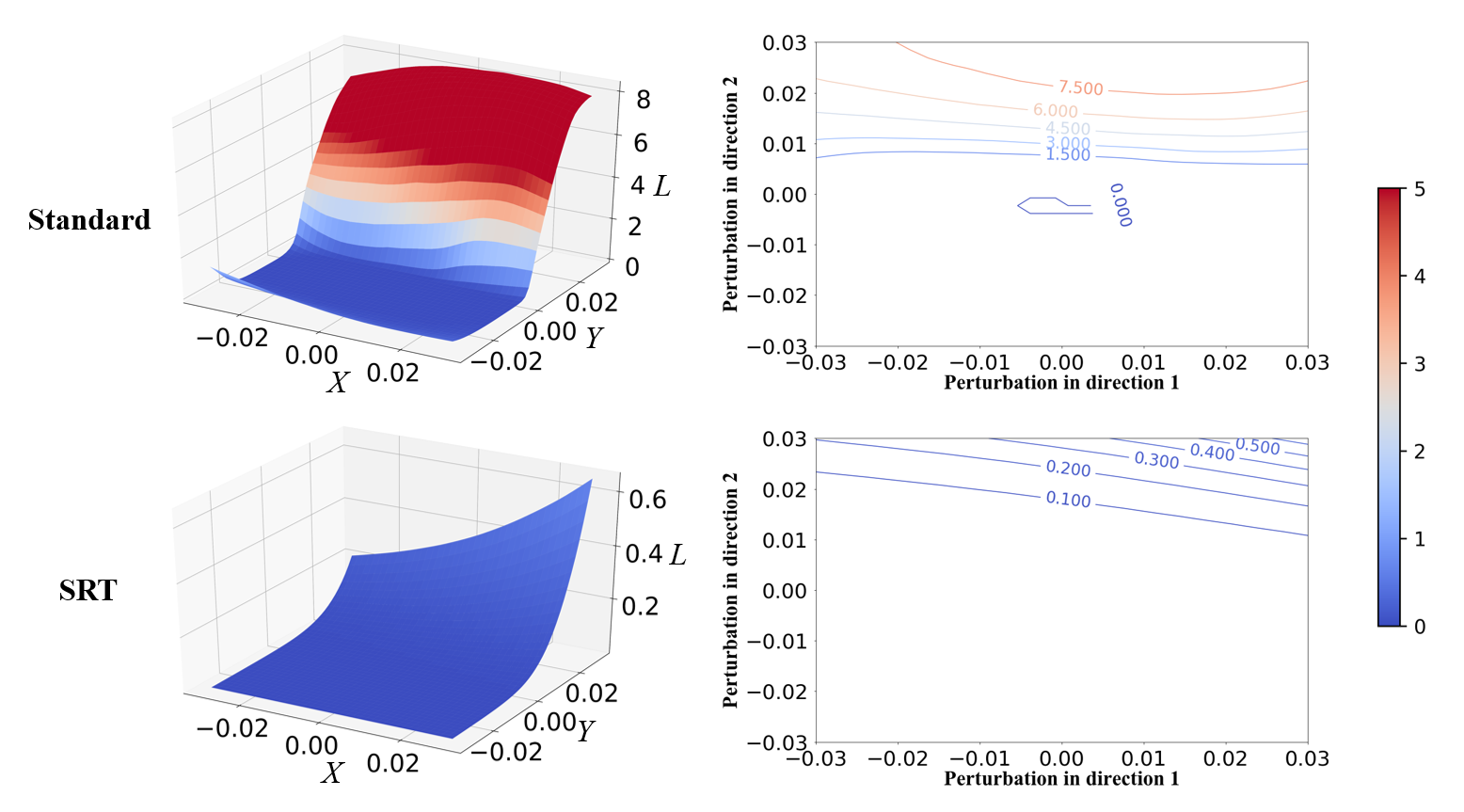}}
\caption{The loss surfaces of SRT-based model and those of the standard training model on the CIFAR-10 dataset. }
\vskip -0.15in
\label{ap_fig3}
\end{figure*}

\section{Additional Decision and Loss Surfaces}

In this section, we present the visualization of decision surface on CIFAR-10 in Figure \ref{ap_fig1}, and the visualizations of loss surfaces on both MNIST \cite{lecun1998} and CIFAR-10 \cite{krizhevsky2009} database in Figures \ref{ap_fig2}-\ref{ap_fig3}.

As shown in Figure \ref{ap_fig1}, the geometric properties of the decision surface on the CIFAR-10 dataset are very similar to those on MNIST. Specifically, compared with SRT, the decision surfaces of the standard training model have sharper peaks and larger slopes, which explains that its prediction is vulnerable to small perturbation. In contrast, the surfaces of SRT are flat and located on a plateau with positive decision value around the benign sample.

\begin{figure}[ht]
\small
\centering
\includegraphics[width=0.7\textwidth]{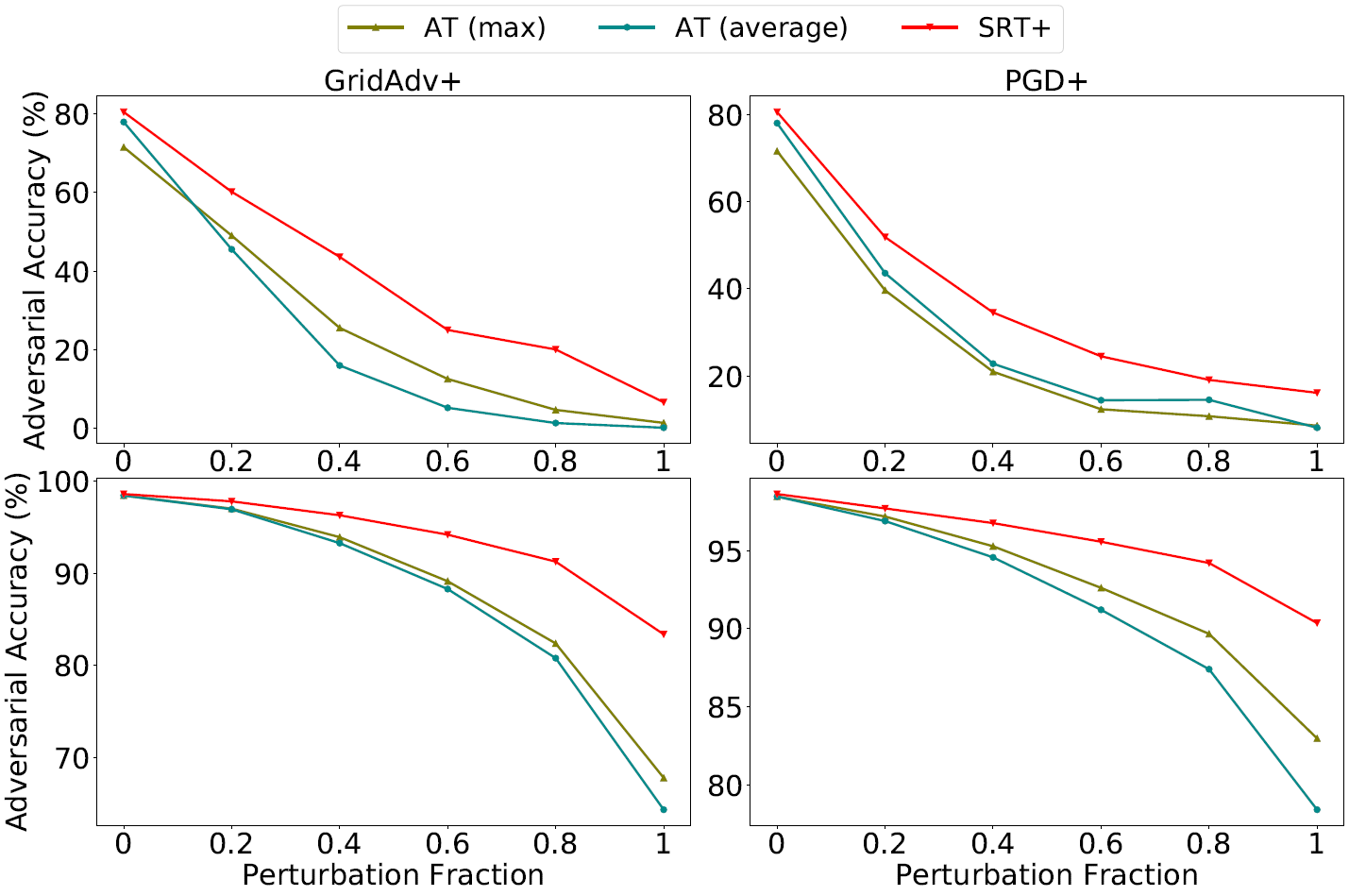}
\caption{The comparison between different types of compound adversarial defenses. The maximum perturbation size is the same as the one used in previous experiments. Perturbation fraction represents the ratio of the current perturbation size to the previous maximum perturbation size. \textbf{First row:} Results on CIFAR-10 dataset. \textbf{Second row:} Results on MNIST dataset.}
\label{ap_fig5}
\vspace{-0.5em}
\end{figure}

Moreover, as shown in Figures \ref{ap_fig2}-\ref{ap_fig3}, the loss surfaces between SRT and those of the model with standard training also have different properties. Specifically, the loss surfaces of the standard training model have much higher peak and larger slopes, compared with the surfaces of SRT. Although the loss value is not directly related to the correctness of prediction compared with the decision value, the aforementioned difference still verifies the effectiveness of SRT.

\section{Different Types of Compound Defenses}

In this section, we compare two different types of the compound adversarial defense, including SRT+ and AT+ proposed in this paper, and those proposed in \cite{tramer2019}, dubbed AT (average) and AT (max). AT (average) and AT (max) respectively minimize the average error rate across perturbation types, or the error rate against an adversary that picks the worst perturbation type for each input. Specifically, for each training sample in the AT (average) and the AT (max), these methods build adversarial examples for all perturbation types and then train either on all examples or only the worst example. We conduct compound adversarial attacks, including GridAdv+ and PGD+, to evaluate the adversarial robustness of the compound adversarial defense methods. All training and evaluation settings are the same as those demonstrated in Section 4.4 of the main manuscript.

As shown in Figure \ref{ap_fig5}, compared with the compound strategies proposed in \cite{tramer2019}, our methods reach better adversarial robustness under compound adversarial attacks. Moreover, in the case of compound defense, our method (SRT+) is also better than all other methods. Especially on the MNIST dataset, compared to the second best method, the AT+, our method achieves an increase of more than $7\%$ in adversarial accuracy under PGD+ attack. This improvement is even more significant under GridAdv+ attack, which is more than $10\%$.

\section{Fast Semi-supervised Robust Training}
Adversarial training based defenses ($e.g.$, AT and TRADES) are often much more time-consuming than standard training. For example, the cost of AT trained in the ImageNet dataset is more than 4,500 GPU hours for ResNet-101, as reported in \cite{xie2019feature}. Such a high cost hinders the use of those methods, especially semi-supervised based methods, which require a large amount of training data. To alleviate this problem, we propose an accelerated version of SRT (dubbed fast SRT) in this section.

\begin{figure*}[!ht]
\centering
\subfigure[spatial scenario]{
\includegraphics[width=0.47\textwidth]{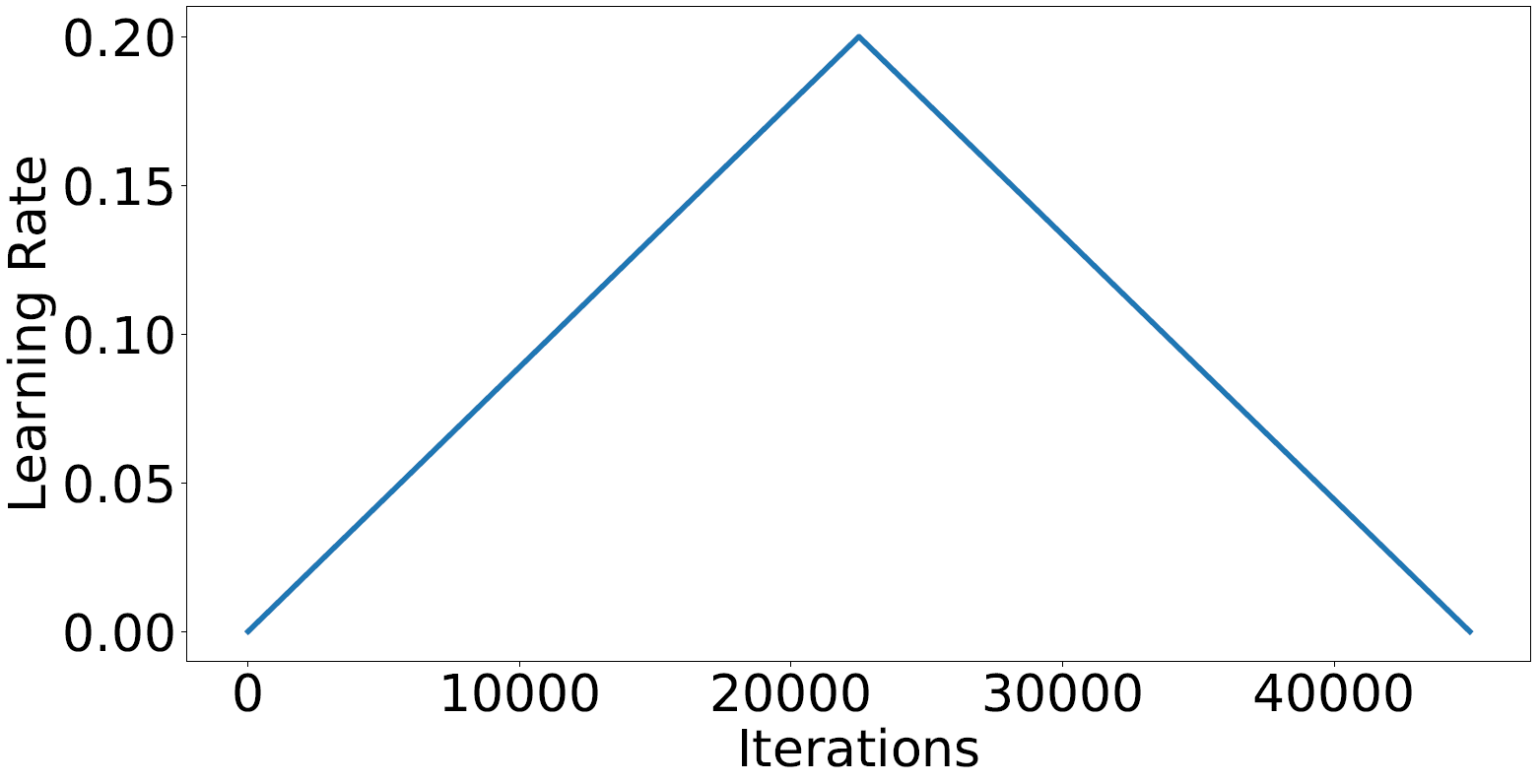}}
\subfigure[pixel-wise scenario]{
\includegraphics[width=0.47\textwidth]{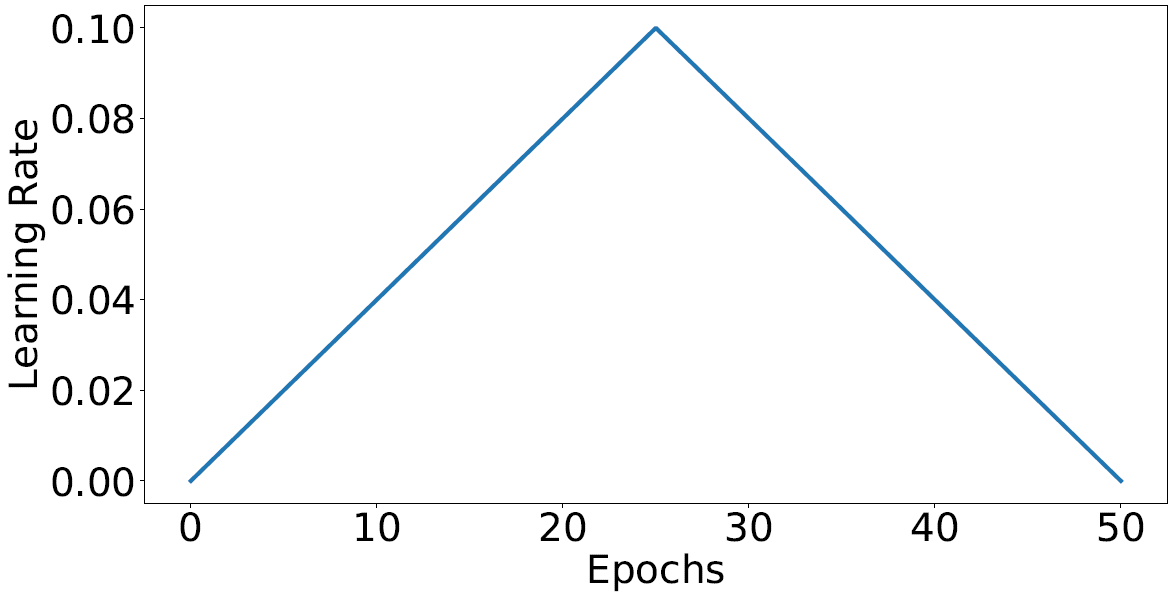}}
\caption{Cyclic learning rates used for fast SRT. }
\label{ap_fig6}
\end{figure*}

\begin{algorithm}[!ht]
   \caption{PGD-based inner-maximization solver with $K$-steps, given benign example $\bm{x}$, loss function $\mathcal{L}$ and step size $\alpha$, maximum perturbation size $\epsilon$ for generating adversarial example $\bm{x}'$.}
   \label{alg: PGD}
\begin{algorithmic}[1]
\STATE $\bm{x}' \leftarrow \bm{x}$ // or initialize with small random Gaussian perturbation
    \FOR{$i=1 \cdots K$}        
        \STATE $\bm{x}' \leftarrow \bm{x}' + \alpha \nabla_{\bm{x}} \mathcal{L}$
        \STATE $\bm{x}' \leftarrow \max{(\min{(\bm{x}', \bm{x} + \epsilon)}, \bm{x}-\epsilon)}$
    \ENDFOR
\STATE $\bm{x}' \leftarrow \max{(\min{(\bm{x}', \bm{1})}, \bm{0})}$
\end{algorithmic}
\end{algorithm}

\begin{algorithm}[!ht]
   \caption{FGSM-based inner-maximization solver, given benign example $\bm{x}$, loss function $\mathcal{L}$ and step size $\alpha$, maximum perturbation size $\epsilon$ for generating adversarial example $\bm{x}'$.}
   \label{alg: FGSM}
\begin{algorithmic}[1]
\STATE $\bm{x}' \leftarrow \bm{x}$ $+$ Uniform($-\epsilon$, $\epsilon$) // initialize with random uniform perturbation
\STATE $\bm{x}' \leftarrow \bm{x}' + \alpha \nabla_{\bm{x}} \mathcal{L}$
\STATE $\bm{x}' \leftarrow \max{(\min{(\bm{x}', \bm{x} + \epsilon)}, \bm{x}-\epsilon)}$
\STATE $\bm{x}' \leftarrow \max{(\min{(\bm{x}', \bm{1})}, \bm{0})}$
\end{algorithmic}
\end{algorithm}

\begin{table}[!ht]
\centering
\caption{Comparison among different models under spatial attacks.}
\begin{tabular}{c|c|ccc|c}
\hline
         & Clean & GridAdv & GridAdv.T & GridAdv.R  & Time (mins)\\ \hline
Worst-of-$k$ & 82.02 & 54.80  & 69.45  & 68.22 & 1,654  \\
KLR          & 85.40 & 56.28  & 72.71  & 71.04 & 1,697  \\ \hline
SRT          & 88.87 & 64.83  & 78.47  & 77.24 & 1,573         \\
fast SRT     & 89.21 & 64.81  & 78.57  & 77.24 & 855   \\ \hline
\end{tabular}
\label{tabel:fastSRT(spatial)}
\vspace{-0.5em}
\end{table}

\begin{table}[!ht]
\centering
\caption{Comparison among different models under pixel-wise attacks.}
\begin{tabular}{c|c|cccccc|c}
\hline
         & Clean & FGSM  & PGD   & MI-FGSM & JSMA  & C\&W  & DDNA  & Time (mins)\\ \hline
AT       & 77.17 & 50.35 & 37.37 & 36.97   & 10.80 & 33.42 & 20.87 & 895              \\
TRADES   & 76.72 & 51.98 & 40.20 & 39.79   & 11.90 & 36.09 & 22.37 & 10,94              \\ \hline
SRT      & 78.46 & 59.34 & 48.66 & 48.24   & 16.99 & 43.33 & 27.71 & 2,916         \\
fast SRT & 82.27 & 58.27 & 44.93 & 44.45   & 12.27 & 40.85 & 26.12 & 257           \\ \hline
\end{tabular}
\label{tabel:fastSRT(pixel-wise)}
\vspace{-0.5em}
\end{table}

In fast SRT, we adopt two acceleration techniques introduced in \cite{wong2020fast}, including \emph{cyclic learning rate} and \emph{FGSM-based inner-maximization solver}. The cyclic schedule can drastically reduce the number of epochs required for training deep networks \cite{smith2017cyclical}, and the FGSM-based solver drastically reduce the iterations for generating adversarial examples (roughly $K$-times faster than the PGD-based solver with $K$-steps). The FGSM-based inner-maximization solver is shown in Algorithm \ref{alg: FGSM}.

\textbf{Settings. } We evaluate the performance of fast SRT on CIFAR-10 dataset. Specifically, the two aforementioned techniques are both involved in the pixel-wise fast SRT, while only cyclic learning rate is adopted in the spatial fast SRT (since its inner-maximization solver is far different from the PGD-based one used in pixel-wise fast SRT). The step size $\alpha$ in FGSM-based solver is set to $1.25\epsilon$ as suggested in \cite{wong2020fast}. We train fast SRT 50 epochs in pixel-wise scenario and 45,000 iterations in spatial scenario, and the specific learning rate schedule is shown in Figure \ref{ap_fig6}. All defense experiments are conducted on one single GeForce GTX 1080 GPU. Other settings are the same as those used in Section 4.2-4.3 in the main manuscript.

\begin{figure*}[!ht]
\centering
\subfigure[spatial scenario]{
\includegraphics[width=0.47\textwidth]{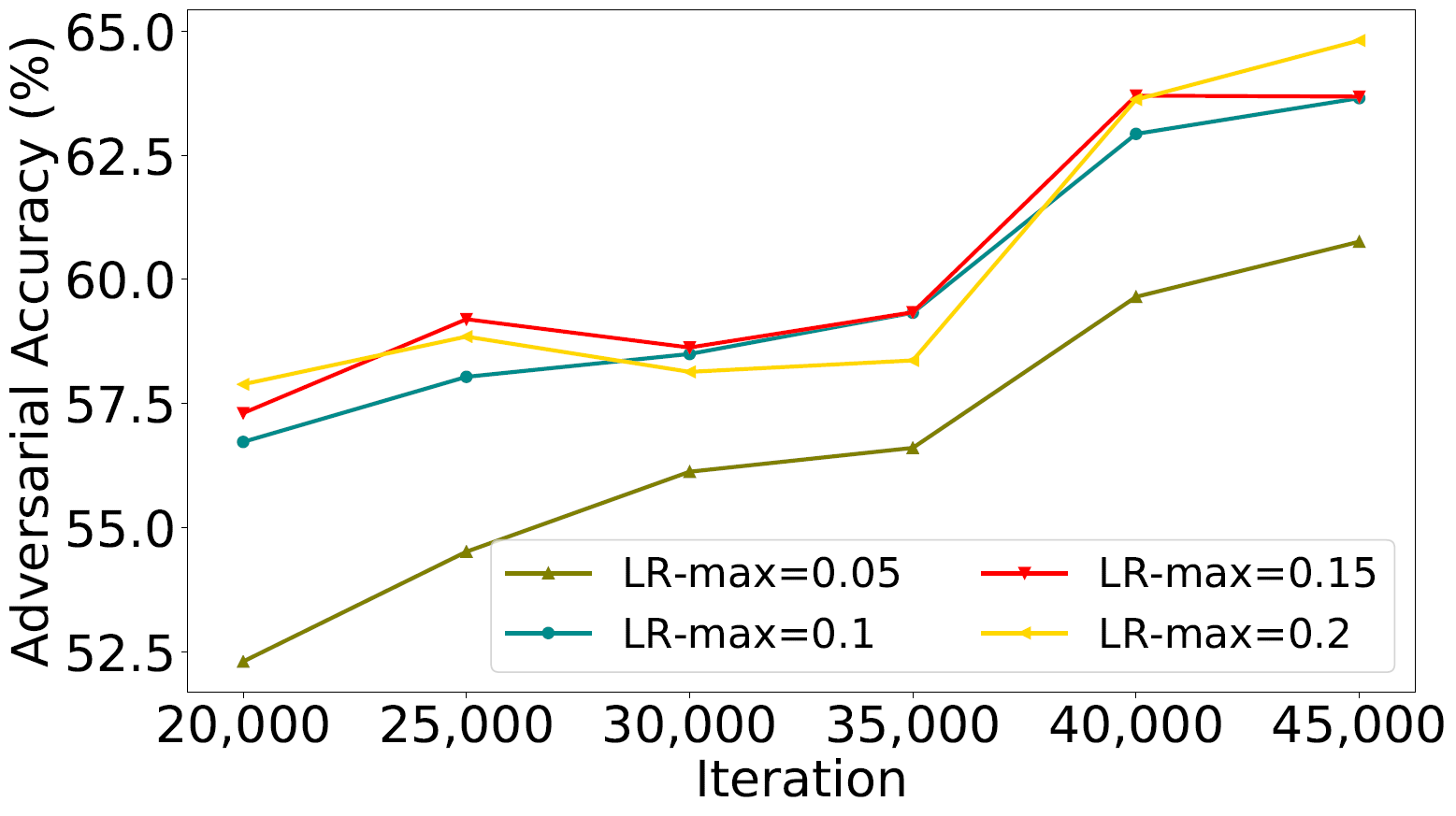}}
\subfigure[pixel-wise scenario]{
\includegraphics[width=0.47\textwidth]{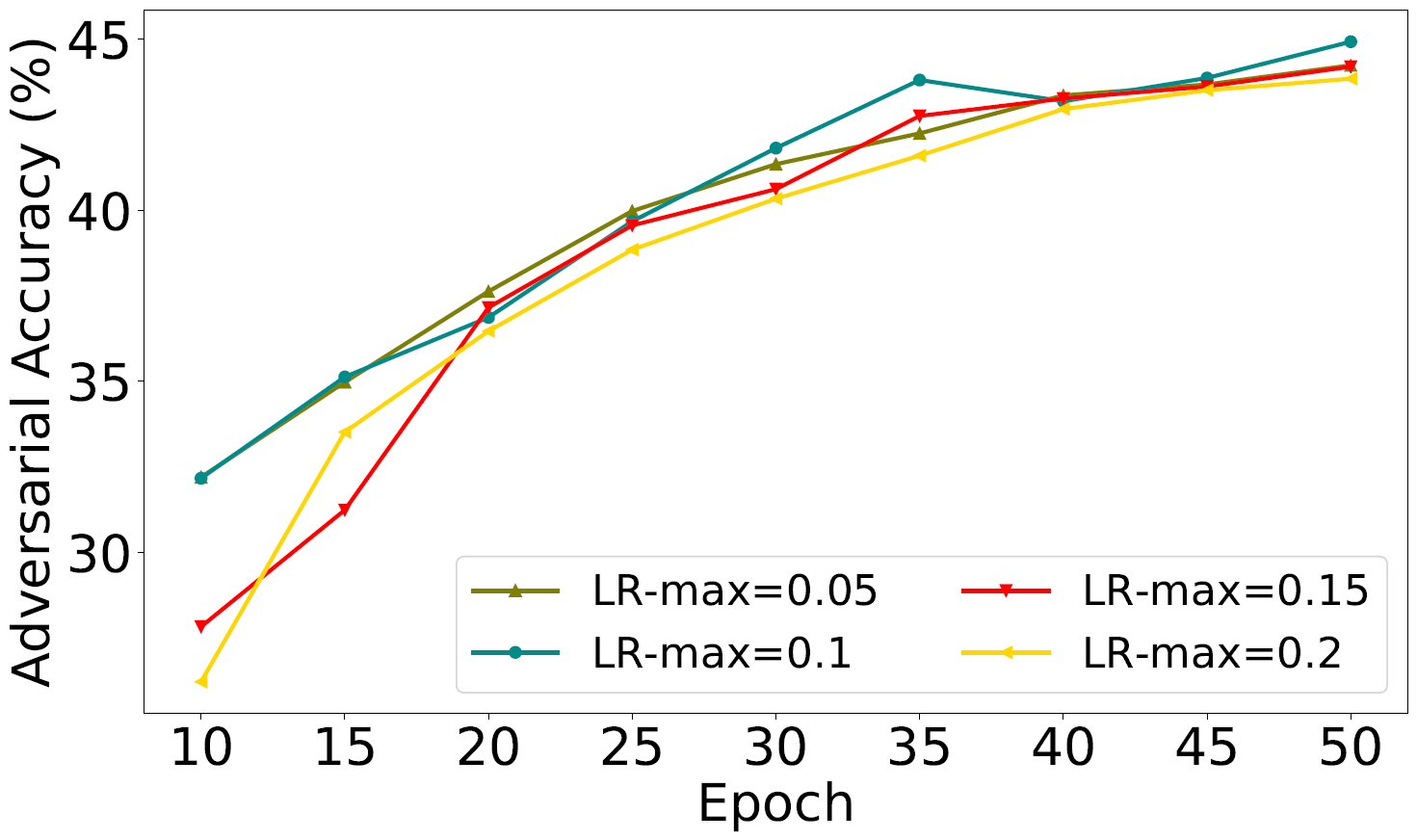}}
\caption{The effect of maximum learning rate and total training time on adversarial robustness. Specifically, we use GridAdv and PGD to evaluate the adversarial robustness in the spatial and pixel-wise scenarios, respectively. The specific attack settings are the same as those used in Section 4.2-4.3 in the main manuscript.}
\label{ap_fig7}
\end{figure*}

\textbf{Results. } As shown in Table \ref{tabel:fastSRT(spatial)}-\ref{tabel:fastSRT(pixel-wise)}, fast SRT is significantly faster than the standard SRT. Especially in the pixel-wise scenario, Fast SRT is ten times faster than the standard version. Although fast SRT has a certain performance degradation compared to SRT especially when FGSM-based solver is used, the performance of fast SRT is still significantly better than supervised defenses ($e.g.$, TRADES, KLR). In other words, fast SRT decreases the computational cost while preserves efficiency.

There are two key hyper-parameters in the fast SRT, including maximum learning rate (dubbed LR-max) and total training time (epochs or iterations). We further visualize their effect on adversarial robustness. As shown in Figure \ref{ap_fig7}, the adversarial robustness increase with the increase of the total training time regardless of the value of LR-max in both spatial and pixel-wise scenarios. Besides, the adversarial accuracy still has an upward trend at the end of the curve, which implies that the robustness can be further improved if additional iterations/epochs are utilized, which brings additional computational costs.

\end{appendices}

\end{document}